\newcommand{\sca}{0.31}
\newcommand*{\QEDW}{\hfill\ensuremath{\square}}%
\providecommand{\norm}[1]{\left\lVert#1\right\rVert}
\newcommand{\br}[1]{\left\{#1\right\}}
\newcommand{\eps}{\varepsilon}
\newcommand{\dist}{\mathrm{dist}}
\newcommand{\range}{\mathrm{range}}
\newcommand{\comment}[1]{}
\newcommand{\REAL}{\ensuremath{\mathbb{R}}}
\newcommand{\abs}[1]        {\left| #1\right|}
\DeclareMathOperator*{\argmin}{arg\,min}
\newcommand{\pr}{\mathrm{pr}}
\newcommand{\smi}{\sum_{i=1}^n}
\newcommand{\sqi}{\displaystyle{\sum_{q\in P}}}
\newcommand{\spi}{\sum_{i=1}^n}
\newcommand{\constalaa} {6z}
\newenvironment{proof}{\noindent\normalfont {\bf Proof}.\ }{\QEDW \par\vskip 4mm\par}
 \newtheorem{theorem}{Theorem}[section]
 \newtheorem{lemma}[theorem]{Lemma}
\newtheorem{definition}[theorem]{Definition}
 \newtheorem{observation}[theorem]{Observation}
\newcommand{\Tau}{\eps}
\title{\LARGE \bf Tight Sensitivity Bounds For Smaller Coresets}
\author{
  Alaa Maalouf\\
  \texttt{Alaamalouf12@gmail.com}
  \and
  Adiel Statman\\
  \texttt{statman.adiel@gmail.com}
  \and
  Dan Feldman\\
  \texttt{dannyf.post@gmail.com}
}
\date{
	\normalfont{The Robotics and Big Data Lab,\\Department of Computer Science,\\University of Haifa,\\Haifa, Israel\\}
    \today
}
\begin{document}
\maketitle

\begin{abstract}
An $\eps$-coreset for Least-Mean-Squares (LMS) of a matrix $A\in\REAL^{n\times d}$ is a small weighted subset of its rows that approximates the sum of squared distances from its rows to every affine $k$-dimensional subspace of $\REAL^d$, up to a factor of $1\pm\eps$.
Such coresets are useful for hyper-parameter tuning and solving many least-mean-squares problems such as low-rank approximation ($k$-SVD), $k$-PCA, Lassso/Ridge/Linear regression and many more. Coresets are also useful for handling streaming, dynamic and distributed big data in parallel. With high probability, non-uniform sampling based on upper bounds on what is known as importance or sensitivity of each row in $A$ yields a coreset. The size of the (sampled) coreset is then near-linear in the total sum of these sensitivity bounds.

We provide algorithms that compute provably \emph{tight} bounds for the sensitivity of each input row.

It is based on two ingredients: (i) iterative algorithm that computes the exact sensitivity of each point up to arbitrary small precision for (non-affine) $k$-subspaces, and (ii) a general reduction of independent interest from computing sensitivity for the family of affine $k$-subspaces in $\REAL^d$ to (non-affine) $(k+1)$- subspaces in $\REAL^{d+1}$.

Experimental results on real-world datasets, including the English Wikipedia documents-term matrix, show that our bounds provide significantly smaller and data-dependent coresets also in practice. Full open source is also provided.
%
\end{abstract}

\section{Introduction}
\paragraph{Motivation.}
Least mean squares solvers are fundamental tools in all the data science fields such as machine learning, computer science and statistics. They are also the building blocks of more involved techniques such as deep learning and signal processing~\cite{mandic2004generalized,widrow1977stationary}.  As explained in~\cite{maalouf2019fast}, this family include Singular Value Decomposition (SVD), Principle Component Analysis (PCA), linear regression, Lasso and Ridge regression, Elastic net, and many more~\cite{golub1971singular,jolliffe2011principal,hoerl1970ridge,seber2012linear,zou2005regularization,tibshirani1996regression,safavian1991survey}. First closed form solutions for problems such as linear regression were published by e.g. Pearson~\cite{pearson1900x} around 1900 but were probably known before. Nevertheless, today they are still used extensively as building blocks in both academy and industry for normalization~\cite{liang2013distributed,kang2011scalable,afrabandpey2016regression}, spectral clustering~\cite{peng2015robust}, graph theory~\cite{zhang2018understanding}, prediction~\cite{copas1983regression,porco2015low}, dimensionality reduction~\cite{laparra2015dimensionality}, feature selection~\cite{gallagher2017cross} and many more; see more examples in~\cite{golub2012matrix}.


Important special case is the low rank approximation of an $n\times d$ real matrix $A$ that can be computed via $k$-SVD (Singular Value Decomposition). Which is the linear (non-affine) $k$-dimensional subspace that minimizes its sum of squared distances over the rows of $A$ for a given integer $k\geq1$, i.e.,
\[
\argmin_{X\in\REAL^{d\times k},X^TX=I } \smi \norm{A_i - A_iXX^T}^2_2,
\]
where $A_i$ is the $i$th row of the matrix $A$ for every integer $i\in \br{1,\cdots , n}$.

More generally, the $k$-PCA is the \emph{affine} $k$-subspace that minimizes the sum of squared distances from the rows of $A$ to it over every $k$-subspace that may be translated from the origin of $R^d$. Formally, an affine $k$-subspace is represented by an orthogonal matrix $X\in \REAL^{d\times k}$ and a vector $\ell\in \REAL^d$ that represent the translation of the subspace from the origin. Hence, we wish to compute:
\[
\argmin_{\substack{ \ell\in \REAL^d \\ X\in\REAL^{d\times k} , X^TX=I }} \smi \norm{(A_i-\ell^T) - (A_i-\ell^T)XX^T}^2_2.
\]


Finally we have the Least-Mean-Squares solvers that gets as input an $n\times d$ real matrix $A$, and another $n$-dimensional real vector $b$ (possibly the zero vector), and aims to minimize the sum of squared distances from the rows (points) of $A$ to some hyperplane that is represented by its normal or vector of $d$ coefficients $x$, that is constrained to be in a given set $X\subseteq\REAL^d$:
\[
\argmin_{x\in X}f(\norm{Ax-b}_2)+g(x).
\]
Here, $g$ is called a \emph{regularization term}. For example: in linear regression $X=\REAL^d$, $f(x)=x^2$ and $g(x)=0$ for every $x\in X$. In Lasso $f(y) = y^2$ and $g(y) = \alpha \cdot \norm{x}_1$ for every $y\in \REAL^d$ and $\alpha>0$.
Such LMS solvers can be computed via the covariance matrix $A^TA$. For example, the solution to linear regression of minimizing $\norm{Ax-b}_2$ is $(A^TA)^{-1}A^Tb$.
\subsection{Coresets}
 For a huge amount of data, those algorithms/sovlers are much time consuming: while in theory the running time is usually $O(nd^2)$, the constants that are hidden in the $O(\cdot)$ notation are significantly large.
 Another problem with such algorithms/sovlers is that we may not be able to use them for big data  on standard machines, since there is no enough memory to provide the relevant computations. 

A modern tool to handle these type of problems, is a data summarization for the input that is sometimes called \emph{coresets}. Coresets also allow us to boost the running time of those algorithms/solvers while using less memory. 

As explained at~\cite{unpublishedDanny}, coresets are especially useful to (a) learn unbounded streaming data that cannot fit into main memory, (b) run in parallel on distributed data among thousands of machines, (c) use low communication between the machines, (d) apply real-time computations on a device, (e) handle privacy and security issues, (f) compute constrained optimization on a coreset that was constructed independently of these constraints and of course boost there running time.

In the context of the $k$-SVD problem, an $\eps$-coreset for a matrix $A\in\REAL^{n\times d}$ is a matrix $C\in \REAL^{m\times d}$ where $m \ll n$, which guarantees that the sum of the squared distances from any linear (non-affine) $k$-subspace to the rows of $C$ will be approximately equal to the  sum of the squared distances from the same $k$-subspace to the rows of $A$, up to a $(1\pm\eps)$ multiplicative factor, i.e., for any matrix $X\in\REAL^{d\times k}$  such that $X^TX=I$ we have,
$$ \abs{\smi \norm{A_i - A_iXX^T}^2_2 -  \sum_{i=1}^m \norm{C_i - C_iXX^T}^2_2} \leq \eps \smi \norm{A_i - A_iXX^T}^2_2.$$

In the $k$-PCA problem, an $\eps$-coreset for the matrix $A$ is a matrix $C\in \REAL^{m\times d}$ such that for every vector $\ell \in \REAL^d$  and a matrix $X\in\REAL^{d\times k}$ where $X^TX=I$ we have:
\[
\abs{ \smi \norm{(A_i-\ell^T) - (A_i-\ell^T)XX^T}^2_2 - \sum_{i=1}^m\norm{(C_i-\ell^T) - (C_i-\ell^T)XX^T}^2_2} \leq \eps\smi \norm{(A_i-\ell^T) - (A_i-\ell^T)XX^T}^2_2.
\]
 
The dimension of the subspace $k$ is a crucial parameter, and of course may determines the size of the coreset and the time complexity of the algorithm.
Such coresets are useful, for example, for many NLP applications in which a Word Embedding model needed to be produced out of a large database, see \cite{mikolov2013distributed,mikolov2013efficient,pennington2014glove}.

Considering  the least means squared problems. Given a matrix  $A\in \REAL^{n\times d}$ and a $n$-dimensional real vector $b$, a coreset for the pair $(A,b)$ in the LMS problem that is represented by the functions $f$ and $g$ as explained before, is a matrix $C\in \REAL^{m\times d}$ where $m \ll n$ and a vector $y\in \REAL^{m}$, such that  for every hyperplane that is represented by its normal or vector of $d$ coefficients $x$ we have,
$$\abs{f(\norm{Ax-b}_2)- f(\norm{Cx-y}_2)} \leq \eps \big( f(\norm{Ax-b}_2)+g(x) \big).$$
Usually $g$ is  a non-negative function (e.g., $g(x) = \norm{x}_2^2$ in Ridge, $g(x) = \norm{x}_1$ in Lasso, and $g(x)=0$ in linear regression). Hence for those cases, $(C,y)$ is a coreset for $(A,b)$  if it satisfies: 
\[
\abs{f(\norm{Ax-b}_2)- f(\norm{Cx-y}_2) }\leq \eps f(\norm{Ax-b}_2),
\]
for every $x\in \REAL^d$.

\subsection{Coreset constructions}
One type of coresets, sometimes called \emph{sketch}, consists on linear combinations of the input points. These coresets use techniques such as Random projections~\cite{DBLP:journals/corr/CohenEMMP14}, JL-Lemma~\cite{sarlos2006improved},SVD~\cite{DBLP:journals/corr/abs-1807-04518} etc. 
However, in this paper we consider only coresets that are \emph{subset} of their input points (rows of the input matrix), up to a multiplicative weight (scaling).  

As explained in~\cite{unpublishedDanny} and~\cite{DBLP:journals/corr/BargerF15}, the advantages of such coresets are: (i) preserved sparsity of the input, (ii) interpretability, (iii) coreset may be used (heuristically) for other problems, (iv)~less numerical issues that occur when non-exact linear combination of points are used.

\paragraph{Sensitivity sampling.}
Over the recent decades many algorithms were suggested to compute such coresets. 
One of the common technique, both in theory and practice, that yields fast and provably good coresets is the approach of non-uniform sampling, sensitivity sampling~\cite{langberg2010universal,DBLP:journals/corr/BravermanFL16}. The sensitivity of a row $p$ in the input matrix $A$ is a number $s(p)\in [0,1]$ that represents how much this row is `important' in this dataset with respect to the desired optimization problem. The motivation for defining sensitivity is the following. Suppose that we have an upper bound $s'(p)\geq s(p)$ for every row $p$ in the matrix $A$, and we use it to sample $m\geq 1$ rows from $A$ where every row $p$ is picked with probability $\pr(p)$ that is proportional to $s'(p)$ and is assigned a weight of $w(p)=\frac{1}{m\cdot pr(p)}$. Then sampling such $\log(d)T/\eps^2$ i.i.d. rows would yield an $\eps$-coreset where $T=\smi s'(A_i)$ is called the total sensitivity bound ($A_i$ is the $i$th row of $A$ for every integer $i\in \br{1,\cdots,n}$ as previously defined).

\paragraph{Sensitivity bounds. }One of the main challenges in constructing coresets is to bound the corresponding sensitivity of each point, which is $s(\cdot)$ in~\eqref{sensreduction}.
While $1$ is a trivial bound for $s(\cdot)$, it would give a coreset of size $|C|=O(n)$ as $T=n$ in this case. In the $k$-SVD problem, for the case of $k=d-1$, as shown at~\cite{yang2017weighted} the sensitivity is also known as leverage score and can be easily bounded by $s'(p)=\norm{u}_2^2$ where $u$ is the corresponding row for $p$ in the matrix $U$ such that $A=UDV^T$ is the thin SVD of the matrix $A$; see Definition~\ref{thinsvddef}, and the sum of sensitivities is exactly $T$. It is easy to prove that this bound is tight in the sense that $s'(p)=s(p)$. For the case $k\leq d-2$, sensitivity bounds are also known whose total sensitivity is $T=O(d)$, by projecting the points on an optimal (or approximated) $k$-subspace an computing the sensitivity of the projected point as shown at~\cite{DBLP:journals/corr/abs-1209-4893}. However, unlike the previous case, these bounds are not tight, as proved in the experimental results of this paper. In the $k$-PCA problem, a tight sensitivity bound for the case $k=0$ (i.e., the $1$-mean problem) was suggested at~\cite{tremblay2018determinantal}, however there is no tight bound for the other cases.

\subsection{Our contribution}
In this work we suggest:
\begin{enumerate} [(i)]
\setlength{\itemsep}{-1pt}
\item  the first algorithm that computes tight sensitivity bounds for the family of (non-affine) $k$-subspaces; see Algorithm~\ref{alg:tight1}. The algorithm is iterative and returns the exact sensitivity $s(p)$ up to arbitrarily small constant. The convergence rate is linear.
\item generalization of the above algorithm for the family of affine $k$-dimensional subspaces of $\REAL^d$. This is by reduction to a problem of computing sensitivity bounds of a new set of points in $\REAL^{d+1}$ for the family of (non-affine) $(k+1)$- subspaces in $\REAL^{d+1}$. See Theorem~\ref{lemma:pca}.
\item experimental results on real-world datasets, including the English Wikipedia documents-term matrix, that show that our bounds provide significantly smaller and data-dependent coresets also in practice.
\item full open source code.
\end{enumerate} 
While our sensitivity bounds are tight for the family of affine (or non-affine) $k$-subspaces in $\REAL^d$ they are no longer tight if we consider only subset of this family of subspaces. Nevertheless, they provide better upper bounds for these problems or query spaces, compared to existing upper bounds that also ignore these constraints and regularization terms. More precisely, the worst case sensitivity is $O(k)$ in both cases, but our bounds are tighter.

\section{Preliminaries}
In the this section we give our notations and definitions that will be used through the paper. We also explain the relation between the notion of total sensitivity and coreset size while relying on Theorem~5.5 in~\cite{DBLP:journals/corr/BravermanFL16}.

\begin{paragraph}{Notations.}For integers $d,n\geq 1$, we denote by $0_d$ the origin of $\REAL^d$. The set $\REAL^{n\times d}$ denote the union over every $n\times d$ real matrix. For a matrix $A\in\REAL^{n\times d}$ the Frobenius norm $\norm{A}_F$ is the squared root of its sum of squared entries, and $Tr(A)$ denotes its trace. A \emph{weighted set} of $n$ points in $\REAL^d$ is a pair $(P,w)$ where $P=\br{p_1,\cdots,p_n}$ is an ordered set in $\REAL^d$, and $w: P\to [0,\infty)$ is called a \emph{weight function}.



For an integer $k \in \br{0,\cdots,d-1}$, a \emph{$k$-subspace} is a shorthand for a $k$-dimensional linear (non-affine) subspace of $\REAL^d$ (i.e., it contains the origin). An \emph{affine $k$-subspace} ($k$-flat) is a translation of a $k$-subspace, i.e.,  that may not contain the origin.
For every point $p\in \REAL^d$ and an affine $k$-subspace $S$ of $\REAL^d$, we define $\mathrm{proj}(p,S)=\argmin_{x\in S} \norm{p-x}_2$ to be the projection of the point $p$ onto the affine $k$-subspace $S$ and $\dist(p,S)=\min_{x\in S}\norm{p-x}_2=\norm{p-\mathrm{proj}(p,S)}_2$ to be the Euclidean distance between the point $p$ to its closest point on $S$. This distance to the power of $z\geq1$ is denoted by $D_z(p,S) =\dist^z(p,S)$ and for brevity, we define $D(p,S) =D_2(p,S) =\dist^2(p,S)$.
\end{paragraph}

\begin{definition} [\emph{Additive $\Tau$-approximation}]
Let $d$ be an integer, $\eps\in (0,1)$ be an error parameter, $f:\REAL^d \to \REAL$ be a function and $s\in \REAL$ be a real number. We call $s$ an \emph{additive $\Tau$-approximation} for $f$ if and only if
$$\sup_{x\in \REAL^d}f(x) \leq s \leq \sup_{x\in \REAL^d}f(x)+\eps .$$
\end{definition}

\begin{definition} [\emph{Thin SVD}] \label{thinsvddef}
Let $n,d$ be two integers. Let $\mathbf{A}\in \REAL^{n\times d}$ be a matrix and let the integer $r$ be its rank. We call $\mathbf{A}=UDV^T$ the thin Singular Value Decomposition of $\mathbf{A}$. That is, $U\in \REAL^{n\times r}$, $V\in \REAL^{d\times r}$, $U^TU=I$, $V^TV=I$ and $D$ is a diagonal matrix.
\end{definition}

\begin{definition} [\textbf{Definition 4.2 in~\cite{DBLP:journals/corr/BravermanFL16}}] \label{def:querySpace}
Let $(P,w)$ be a weighted set of $n$ points in $\REAL^d$. Let $Q$ be a function that maps every set $C\subseteq P$ to a corresponding set $Q(C)$, such that $Q(T) \subseteq Q(C)$ for every $T\subseteq C$. Let $f:P\times Q(P) \to \REAL$ be a cost function. The tuple $(P,w,Q,f)$ is called a \emph{query space}.
\end{definition}

\begin{definition} [\textbf{Definition 4.5 in~\cite{DBLP:journals/corr/BravermanFL16}}] \label{def:VC}
For a query space $(P,w,Q,f)$, $q\in Q(P)$ and $r\in [0,\infty)$ we define
\[
\range(q,r) = \br{p\in P \mid w(p)\cdot f(p,q) \leq r}.
\]
The dimension of $(P,w,Q,f)$ is the smallest integer $d'$ such that for every $C\subseteq P$ we have
\[
\left| \br{\range(q,r) \mid q\in Q(C), r\in [0,\infty)} \right| \leq |C|^{d'}.
\]
\end{definition}

\begin{theorem} [\textbf{Theorem 5.5 in~\cite{DBLP:journals/corr/BravermanFL16}}]\label{braverman}
Let $(P,w,Q,f)$ be a query space; see Definition~\ref{def:querySpace}, where $f$ is a non-negative function. Let $s:P\to [0,\infty)$ such that
\[
\sup_{q\in Q} \frac{w(p)f(p,q)}{\sum_{p\in P} w(p)f(p,q)} \leq s(p),
\]
for every $p\in P$ and $q\in Q(P)$ such that the denominator is non-zero. Let $t = \sum_{p\in P} s(p)$ and let $d'$ be the dimension of the query space $(P,w,Q,f)$; See Definition~\ref{def:VC}. Let $c \geq 1$ be a sufficiently large constant and let $\varepsilon, \delta \in (0,1)$. Let $C$ be a random sample of
\[
|C| \geq \frac{ct}{\varepsilon^2}\left(d'\log{t}+\log{\frac{1}{\delta}}\right)
\]
points from $P$, such that $p$ is sampled with probability $s(p)/t$ for every $p\in P$. Let $u(p) = \frac{t\cdot w(p)}{s(p)|C|}$ for every $p\in C$. Then, with probability at least $1-\delta$, for every $q\in Q$ it holds that
\[
(1-\varepsilon)\sum_{p\in P} w(p)\cdot f(p,q) \leq \sum_{p\in C} u(p)\cdot f(p,q) \leq (1+\varepsilon)\sum_{p\in P} w(p)\cdot f(p,q).
\]
\end{theorem}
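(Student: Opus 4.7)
The plan is to reduce the statement to a uniform concentration bound for bounded random variables indexed by $Q(P)$ and then invoke a VC-type uniform convergence theorem that exploits the combinatorial dimension $d'$ of Definition~\ref{def:VC}. Fix $q\in Q(P)$ with $F(q):=\sum_{p\in P} w(p) f(p,q) > 0$, and consider the per-sample estimator $X_q(p):= u(p) f(p,q) = \tfrac{t\, w(p) f(p,q)}{s(p)\cdot |C|}$ for a point $p$ drawn with probability $s(p)/t$. The sensitivity hypothesis $w(p)f(p,q)\leq s(p) F(q)$ immediately gives the pointwise bound $X_q(p)\leq tF(q)/|C|$, and using the same inequality once inside the second moment yields $\mathbb{E}[X_q(p)^2]\leq tF(q)^2/|C|^2$. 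A direct calculation shows $\mathbb{E}[X_q(p)]=F(q)/|C|$, so after summing $|C|$ i.i.d.\ copies the target expectation is exactly $F(q)$.

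\textbf{Pointwise concentration.} Feeding these bounds into Bernstein's inequality gives
\[
\Pr\Big[\,\Big|\sum_{p\in C} u(p) f(p,q) - F(q)\Big| \geq \eps F(q)\Big] \;\leq\; 2\exp\!\Big(-\tfrac{\eps^2 |C|}{c_1\, t\, (1+\eps)}\Big),
\]
so $|C|=\Theta(t\log(1/\delta)/\eps^2)$ already suffices for any single fixed query. This matches the claimed sample size except for the $d'\log t$ factor, which will come from the union over all queries.

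\textbf{Uniform convergence over $Q(P)$.} To lift the above to a bound that holds simultaneously for every query, normalize the cost to $g_q(p):=w(p)f(p,q)/(s(p) F(q))\in[0,1]$ and consider the range space generated by $\range(q,r)=\{p\in P : w(p) f(p,q)\leq r\}$, whose shatter function is bounded by $|C|^{d'}$ according to Definition~\ref{def:VC}. A relative $\eps$-approximation theorem for bounded function classes of VC dimension $d'$ (Li--Long--Srinivasan, or a Vapnik--Chervonenkis uniform deviation inequality) then says that a sample of size $\frac{c t}{\eps^2}(d'\log t + \log(1/\delta))$ is, with probability at least $1-\delta$, a relative $\eps$-approximation for the entire family $\{g_q\}_{q\in Q(P)}$. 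Multiplying back through by $F(q)$ converts this into the required $(1\pm\eps)$ multiplicative guarantee on $\sum_{p\in C} u(p) f(p,q)$.

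\textbf{Main obstacle.} The subtle step is the uniform one: the family $\{f(\cdot,q)\}_{q\in Q(P)}$ is continuously indexed and has highly nonuniform scale $F(q)$, so no naive union bound is available. The sensitivity reweighting $s(p)$ is precisely what flattens this scale to $[0,1]$ so that a single VC-type argument applies, but translating the combinatorial bound on $\range(q,r)$ into a uniform deviation bound for the normalized real-valued functions $g_q$ requires a peeling/chaining argument (as carried out in the proof of Theorem~5.5 of~\cite{DBLP:journals/corr/BravermanFL16}), which is what introduces the extra $d'\log t$ factor. Degenerate cases $F(q)=0$ or $s(p)=0$ contribute zero on both sides and are handled by the convention $0/0:=0$.
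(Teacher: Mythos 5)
The paper does not actually prove this statement: it is imported verbatim as Theorem~5.5 of \cite{DBLP:journals/corr/BravermanFL16} and used as a black box, so there is no in-paper proof to compare you against. Judged on its own terms, your outline reproduces the standard sensitivity-sampling strategy of that reference, and the single-query computations are correct: $\mathbb{E}[X_q(p)]=F(q)/|C|$, the pointwise bound $X_q(p)\le tF(q)/|C|$ and the second-moment bound $\mathbb{E}[X_q(p)^2]\le tF(q)^2/|C|^2$ all follow from $w(p)f(p,q)\le s(p)F(q)$, and Bernstein then gives the claimed pointwise concentration with $|C|=\Theta(t\log(1/\delta)/\eps^2)$.

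As a proof, however, there is a genuine gap exactly where you flag the ``main obstacle'': the uniform statement over all of $Q(P)$ is not established but delegated to a peeling/chaining argument ``as carried out in the proof of Theorem~5.5 of \cite{DBLP:journals/corr/BravermanFL16}'', i.e., to the proof of the very theorem being proved, which makes the proposal a summary of the known route rather than a self-contained argument. Two concrete steps would have to be supplied. First, the relative $\eps$-approximation theorem you invoke applies to the range space induced by the normalized functions $g_q(p)=w(p)f(p,q)/s(p)$ under the sampling measure $s(\cdot)/t$, whereas Definition~\ref{def:VC} bounds the growth of the ranges $\br{p\in P : w(p)f(p,q)\le r}$; one must argue that the dimension of the reweighted family is still $O(d')$, and this is not automatic, since dividing each point's value by $s(p)$ changes which subsets are realizable as ranges. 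Second, the passage from an approximation of the normalized family to the claimed $(1\pm\eps)$ multiplicative bound with only linear (not quadratic) dependence on $t$ requires the relative-approximation regime: $t\,g_q$ takes values in $[0,t]$ with mean $1$ and second moment at most $t$ times its mean, so the relevant ranges have measure at least $1/t$, and this is where both the factor $t$ and the $d'\log t$ term arise. Your sketch gestures at this but does not carry it out; closing these two steps (or explicitly quoting an independent relative $\eps$-approximation theorem together with a dimension bound for the normalized range space) is what separates the outline from a proof.
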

\begin{paragraph} {Smaller total sensitivity implies smaller coreset size.}
Given $(P,w)$ a weighted set of $n$ points in $\REAL^d$ and given also the sensitivity $s:P\to [0,\infty)$ of each point as defined in~\ref{braverman}, in order to obtain a coreset that guarantees $(1\pm\varepsilon)$ multiplicative error with probability at least $1-\delta$ we have to sample $O(\frac{ct}{\varepsilon^2}\left(d'\log{t}+\log{\frac{1}{\delta}}\right))$ points from $P$ where $t=\spi s(p)$ is the sum of sensitivity over all the points in $P$. Thus the smaller the sensitivity bound $s(p)$ of each point $p\in P$ the smaller the total sensitivity $t$ and the smaller is the size of the coreset needed.

\end{paragraph}
\section{Sensitivity of  Non-affine $k$-subspaces}
Let $S'$ be a (non-affine) $k$-subspace of $\REAL^d$. Every such subspace $S'$ corresponds to a column space of a matrix $X\in \REAL^{d\times k}$ whose columns are orthonormal ($X^TX=I$). Let $(P,w)$ be a weighted set of $n\geq 1$ points in $\REAL^d$.
Let $\mathbf{P}\in \REAL^{n\times d}$ denote the matrix whose $i$th row is the $i$th point of $P$ multiplied by the square root of its weight,
$
\mathbf{P}=\begin{bmatrix}
\sqrt{w(p_1)}p_1 &
\cdots
\sqrt{w(p_n)}p_n
\end{bmatrix}^T
$
and let $\mathbf{p} = \sqrt{w(p)}p^T$ for every $p\in P$.
 The projection of the rows of $\mathbf{P}$ onto $S'$ is $\mathbf{P}X\in\REAL^{n\times k}$ using the column base of $X$, and $\mathbf{P}XX^T\in\REAL^{n\times d}$ in $\REAL^d$. Hence, for every $p\in P$, the weighted squared distance from $p$ to $S'$ is
\begin{align}
w(p)\cdot D(p,S') =  w(p) \norm{p^T-p^TXX^T}_2^2 = \norm{\sqrt{w(p)} \cdot p^T- \sqrt{w(p)} \cdot p^TXX^T}_2^2 =\norm{\mathbf{p} - \mathbf{p}XX^T}_2^2. \nonumber
\end{align}

By letting $Y\in \REAL^{d\times(d-k)}$ be the matrix that spans the orthogonal complement subspace of $S'$ (i.e., $Y^TY=I$ and $[X,Y][X,Y]^T=I$), we obtain
\begin{align*}
\mathbf{p} = \mathbf{p}I =\mathbf{p}(XX^T + YY^T) = \mathbf{p}XX^T + \mathbf{p}YY^T,
\end{align*}
and by subtracting $\mathbf{p}XX^T$ from both sides and applying squared norm we get that
\begin{align*}
w(p)D(p,S')=\norm{\mathbf{p}- \mathbf{p}XX^T}_2^2 = \norm{\mathbf{p}YY^T}^2_2 = \norm{\mathbf{p}Y}^2_2.
\end{align*}
From the last equality we get that the sum of squared distance from the set $P$ to the $k$-subspace $S'$ is
\begin{align*}
\sqi  w(q)\cdot D(q,S') = \sqi \norm{\mathbf{q}Y}^2_2 =  \norm{\mathbf{P}Y}^2_F.
\end{align*}

Thus by letting $\mathcal{S}_{d}$ be the set of all (non-affine) $k$-subspaces of $\REAL^d$, we get that the sensitivity of a point $p\in P$ in the query space $(P,w,\mathcal{S}_{d},D)$ is:
\begin{align}
s(p) = \sup_{S'\in \mathcal{S}_{d}} \frac{w(p)D(p,S')}{\sqi w(q)D(q,S')} =
 \sup_{Y\in \REAL^{d\times (d-k)},  Y^TY=I} \frac {\norm{\mathbf{p}Y}_2^2}{\norm{\mathbf{P}Y}_F^2},\label{sensreduction}
\end{align}
such that the denominator is not zero.

\newcommand{\nonaafinebound}{\textsc{Non-Affine-Sensitivity}}
\setcounter{AlgoLine}{0}
\begin{algorithm}[th]
\caption{$\nonaafinebound(P,w,p, k,\Tau)$; see Lemma~\ref{lemma:svd}}\label{alg:tight1}
{\begin{minipage}{\textwidth}
\begin{tabbing}
\textbf{Input:} \quad\=A weighted set $(P,w)$ of $n$ points in $\mathbb{R}^d$, a point $p\in P$,
\\\> an integer $k\in\br{0,\cdots,d-1}$, and an error parameter $\Tau\in(0,1)$. \\
\textbf{Output:} An additive $\Tau$-approximation $s'$ to the sensitivity $s(p)$ of $p$.
\end{tabbing}\end{minipage}}

\nl $\mathbf{p} :=  \sqrt{w(p)}{p}^{T}$  ; $\mathbf{P}:=\begin{bmatrix}
        \sqrt{w(p_1)}{p}_1  \cdots  \sqrt{w(p_n)}{p}_n \\
            \end{bmatrix}^T$ \label{matP} 

\nl\If{$k=d-1$} {

\nl denote by $UDV^T$  the thin SVD of $\mathbf{P}.$	

\nl\Return $\norm{u}_2^2$ \quad\quad\tcp{where $u$ is the corresponding row in $U$ for $\mathbf{p}$ in $\mathbf{P}$.} \label{u_i line bound}	

}

\nl$\ell:= d-k$ \label{big p}

\nl$\displaystyle{\gamma := \frac{\sum_{i=1}^{\ell} \lambda_{d-i+1}( \mathbf{P}^T \mathbf{P}) }{\sum_{i=1}^{\ell} \lambda_{i}( \mathbf{P}^T \mathbf{P})}}$ \\ \tcp{ where $\lambda_{i}( \mathbf{P}^T \mathbf{P})$ is the $i$th eigenvalue of $ \mathbf{P}^T \mathbf{P}$ for every $i\in \br{1,\cdots,d}$.} \label{def gamma}

$X :=$ any $d\times {\ell}$ matrix whose columns are orthonormal (i.e., $X^TX=I$).

$s_{new} :=\infty$ ; $s_{old} := -\infty $

\While {$\displaystyle {s_{new} -  \frac {\Tau \gamma}{(1-\gamma)}\geq s_{old}}$}
{\label{while}

$\displaystyle {s_{new}:= \frac{\norm{{\mathbf{p}}X}^2_2}{\norm{{\mathbf{P}}X}^2_F}}$\label{snew}

$G := {\mathbf{p}^T \mathbf{p} } - s_{new} \cdot {\mathbf{P}^T \mathbf{P}}$\label{snewafter}

$X := \begin{bmatrix}
        x_1 \cdots  x_\ell \\
            \end{bmatrix}$  \quad\quad\tcp{a matrix that its cols are the eigenvectors corresponding to the $\ell$'th largest eigenvalues of $G$.}


}

$s' = s_{new}+\Tau$

\Return $s'$

\end{algorithm}
\begin{lemma}\label{lemma:svd}
Let $(P,w)$ be a weighted set of $n$ points in $\REAL^d$, $k\in \br{0,\cdots,d-1}$ be an integer, $\Tau \in (0,1)$ be an error parameter, and let $p\in P$. Let $\mathcal{S}_{d}$ denote the set of all non-affine $k$-subspaces in $\REAL^d$,  $\displaystyle{s(p) = \sup_{S'\in \mathcal{S}_{d}} \frac{w(p)D(p,S')}{\sqi w(q)D(q,S')}}$ denote the sensitivity of $p$ in the $(P,w,\mathcal{S}_{d},D)$ query space, where the denominator is not zero, and let $s'$ be the output of a call to $\nonaafinebound(P,w,p, k,\Tau)$; See Algorithm~\ref{alg:tight1}. Then the following holds according to $k$:
\begin{enumerate}[(i)]
\item if $k=d-1$ then  $s' = s(p)$.
\item if $k\in \br{0,\cdots,d-2}$ then  $s(p) \leq s' \leq s(p) + \Tau$.
\end{enumerate}
\end{lemma}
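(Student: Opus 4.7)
For Case~(i), when $k = d-1$ the matrix $X$ (equivalently $Y$) reduces to a single unit vector, and~\eqref{sensreduction} becomes the generalized Rayleigh quotient $s(p) = \sup_{\|y\|=1} (\mathbf{p}y)^2/\|\mathbf{P}y\|^2$. The plan is to substitute the thin SVD $\mathbf{P} = UDV^T$, write $\mathbf{p} = u D V^T$ with $u$ the row of $U$ corresponding to $p$, and change variables to $z = D V^T y$; the ratio then becomes $(uz)^2/\|z\|^2$, whose supremum is $\|u\|_2^2$ by Cauchy--Schwarz (vectors outside the range of $V$ make the denominator vanish and can be discarded). This matches the output of line~\ref{u_i line bound} exactly.

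For Case~(ii), the plan is to analyze the inner loop as a Dinkelbach-style fractional-programming iteration. Using $\|\mathbf{p}X\|_2^2 = \mathrm{tr}(X^T \mathbf{p}^T\mathbf{p} X)$ and $\|\mathbf{P}X\|_F^2 = \mathrm{tr}(X^T \mathbf{P}^T\mathbf{P} X)$, the quantity $\|\mathbf{p}X\|^2 - s\|\mathbf{P}X\|_F^2$ equals $\mathrm{tr}(X^T G X)$ for $G = \mathbf{p}^T\mathbf{p} - s \mathbf{P}^T\mathbf{P}$. By Ky~Fan's trace-maximization theorem, this trace, maximized over orthonormal $X \in \REAL^{d\times\ell}$, is attained at the top-$\ell$ eigenvectors of $G$---exactly the update performed by the algorithm. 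Monotonicity then follows because the trace is zero at the current $X$ by the very definition of $s_{new}$, so the updated $X$ yields a nonnegative trace, giving $\|\mathbf{p}X_{new}\|^2/\|\mathbf{P}X_{new}\|_F^2 \geq s_{new}$. Combined with the trivial upper bound $s_{new} \leq s(p)$, this yields convergence of the iterates from below.

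For the convergence rate and termination criterion, compare to a maximizer $X^*$ of $s(p) = s^*$: since $\mathrm{tr}(X_{new}^T G X_{new}) \geq \mathrm{tr}(X^{*T} G X^*) = (s^* - s_{new})\|\mathbf{P}X^*\|_F^2$, dividing through by $\|\mathbf{P}X_{new}\|_F^2$ gives $s_{next} - s_{new} \geq (s^* - s_{new})\cdot \|\mathbf{P}X^*\|_F^2/\|\mathbf{P}X_{new}\|_F^2$. Applying Ky~Fan once more bounds the last ratio from below by $\gamma$, the ratio of the smallest $\ell$ to the largest $\ell$ eigenvalues of $\mathbf{P}^T\mathbf{P}$, so $s_{next} - s_{new} \geq \gamma(s^* - s_{new})$. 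Writing $s_{old}$ for the previous value of $s_{new}$, this rearranges to $s^* - s_{new} \leq \tfrac{1-\gamma}{\gamma}(s_{new} - s_{old})$. Hence when the while condition fails, i.e.\ $s_{new} - s_{old} < \tfrac{\Tau\gamma}{1-\gamma}$, the gap $s^* - s_{new}$ is smaller than $\Tau$, and the returned $s' = s_{new} + \Tau$ lies in $[s(p),\, s(p)+\Tau]$ as required.

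The main obstacle is the careful bookkeeping needed to match the contraction inequality with the stopping rule: one has to identify $s_{old}$ with the previous iterate's value of $s_{new}$ and combine this with the precise threshold $\tfrac{\Tau\gamma}{1-\gamma}$ so that the $\gamma$ factors cancel and deliver exactly the additive $\Tau$ guarantee. A secondary technicality is rank deficiency of $\mathbf{P}$, where some of the $\ell$ smallest eigenvalues of $\mathbf{P}^T\mathbf{P}$ vanish and $\gamma = 0$; this should be handled by restricting the iteration to the column span of $\mathbf{P}^T$, where the denominator of the sensitivity remains positive and the Ky~Fan bounds can be applied nontrivially.
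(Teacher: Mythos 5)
Your proof is correct and follows the same overall skeleton as the paper's, but it fills in the key step that the paper outsources. For case (i) the two arguments are essentially identical: the paper bounds the Rayleigh quotient by Cauchy--Schwarz and then exhibits the explicit maximizer $x=VD^{-1}u^T$, while your change of variables $z=DV^Ty$ (discarding directions where $\norm{\mathbf{P}y}_2=0$) obtains the bound $\norm{u}_2^2$ and its attainment in one stroke. For case (ii) the paper does not prove monotonicity, the bound $s_j\le s(p)$, or the contraction $s(p)-s_j\le(1-\gamma)\bigl(s(p)-s_{j-1}\bigr)$; it imports all three from Theorem~5.1 of~\cite{zhang2010fast} and only supplies the bookkeeping that turns the stopping threshold $\Tau\gamma/(1-\gamma)$ into the additive~$\Tau$ guarantee --- bookkeeping which is word-for-word the same as yours, since your rearrangement $s(p)-s_{new}\le\frac{1-\gamma}{\gamma}(s_{new}-s_{old})$ is equivalent to the paper's inequality~\eqref{th5.1ofzhang}. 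What you do differently is re-derive the cited facts from first principles: the Dinkelbach-style identity $\mathrm{tr}(X^TGX)=\norm{\mathbf{p}X}_2^2-s_{new}\norm{\mathbf{P}X}_F^2$, Ky Fan trace maximization to justify the eigenvector update and to get monotonicity (the trace is zero at the current $X$), and a comparison with the optimizer $X^*$ together with the two-sided Ky Fan bounds $\sum_{i=1}^{\ell}\lambda_{d-i+1}\le\mathrm{tr}(X^T\mathbf{P}^T\mathbf{P}X)\le\sum_{i=1}^{\ell}\lambda_{i}$ to obtain exactly the factor $\gamma$ used by the algorithm. This buys a self-contained proof, and your remark on the rank-deficient case ($\gamma=0$, handled by restricting to the column span of $\mathbf{P}^T$) addresses a degeneracy the paper silently ignores; the cost is that you are essentially reproving Zhang's theorem, which the paper's shorter proof simply cites.
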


\begin{proof}
Let $\mathbf{P}$ and $\mathbf{p}$ be the matrix and row vector that are defined at Line~\ref{matP} of Algorithm~\ref{alg:tight1}.

\textbf{Proof of Claim (i) $k=d-1$}:
 Let $\mathbf{P}=UDV^T$ denote the thin Singular Value Decomposition of $\mathbf{P}$,
 and let $\mathbf{p}=uD V^T$ (where $u$ is the corresponding row in $U$ for the row $\mathbf{p}$ in $\mathbf{P}$ ). Let $x^*$ be the vector that maximizes $\frac{\abs{\mathbf{p}x}^2}{\norm{\mathbf{P}x}_2^2}$ over every $x\in\REAL^d$ such that $\norm{\mathbf{P}x}_2>0$ and $x^Tx=1$. It is well known (e.g.~\cite{yang2017weighted}) that,\\
\begin{align*}
s(p) =
&\frac{\abs{\mathbf{p}x^*}^2}{\norm{\mathbf{P}x^*}_2^2}
= \frac{\abs{uD V^Tx^*}^2}{\norm{UD V^Tx^*}_2^2}
= \frac{\abs{uDV^Tx^*}^2}{\norm{DV^Tx^*}_2^2}\leq \frac{\norm{u}_2^2\norm{DV^Tx^*}_2^2}{\norm{DV^Tx^*}_2^2}
= \norm{u}_2^2,
\end{align*}
where the first equality holds by~\eqref{sensreduction}, the second is by the definition of $u,U,D$ and $V$, the third is since the columns of $U$ are orthonormal, and the inequality holds by the Cauchy Schwarz inequality.
We now prove that this upper bound is tight. Indeed, substituting $x=VD^{-1}u^T$ (where $D^{-1}$ is the inverse matrix of $D$) attains this maximum as,
\begin{align*} 
\frac{\abs{\mathbf{p}x}^2}{\norm{\mathbf{P}x}_2^2} =
\frac{\abs{uDV^TVD^{-1}u^T}^2}{\norm {DV^TVD^{-1}u^T}_2^2}=
\frac{\abs{uu^T}^2}{\norm{u^T}_2^2} = \norm{u}_2^2.
\end{align*}
Thus, $s(p) =\norm{u}_2^2 $, which is the returned value of Algorithm~\ref{alg:tight1} for the case $k=d-1$.; See Line~\ref{u_i line bound}.
\newcommand{\X}{\displaystyle{ X^*}}\\

\textbf{Proof of Claim (ii) $k\in\br{0,\cdots,d-2}$}: Let $\X$ be the Matrix that maximizes $\frac{\norm{\mathbf{p}X}_2^2}{\norm{\mathbf{P}X}_F^2}$ over every $X\in\REAL^{d\times (d-k)}$ such that $\norm{\mathbf{P}X}_F>0$ and $X^TX=I$. We have,
\begin{align*}
s(p) =
\frac{\norm{\mathbf{p}\X}_2^2 }{ \norm{\mathbf{P}{\X}}_F^2}=
{\frac{Tr({\X}^T\mathbf{p}^T\mathbf{p}{\X})}{Tr({\X}^T \mathbf{P}^T \mathbf{P} {\X})}}.
\end{align*}

Let $j_{\max}>1$ be the number of iterations that are executed in the ``while" loop of Algorithm~\ref{alg:tight1} until it stops, and let $s_j$ be the value of $s_{new}$ during the execution of Line~\ref{snewafter} in the $j$th iteration for every $j\in \br{1,\cdots,j_{\max}}$.
The while loop of Algorithm~\ref{alg:tight1} is the same while loop of Algorithm~2 in~\cite{zhang2010fast}, where the main difference is the stopping criterion.
  In \cite{zhang2010fast} it was proven that $s_{new}$ (in the ``while`` loop of Algorithm~\ref{alg:tight1}) converges to the global supremum of $$ \displaystyle{\frac{Tr(X^T\mathbf{p}^T\mathbf{p}X)}{Tr(X^T {\mathbf{P}^T\mathbf{P}} X)}}.$$
 Moreover $s_j \leq s(p)$ for every $j\in \br{1,\cdots,j_{\max}}$.

  Let $\gamma$ be defined as at Line~\ref{def gamma} of Algorithm~\ref{alg:tight1}. Based on Theorem 5.1 in \cite{zhang2010fast}, for any integer $j\geq 1$, we have that,
\begin{align*}
s(p) - s_j &\leq (1-\gamma)(s(p)  - s_{j-1}) .
\end{align*}
Rearranging yields,
\begin{align}
s(p) \leq \frac{s_j - s_{j-1}(1-\gamma)}{\gamma}. \label{th5.1ofzhang}
\end{align}
By the stopping criterion (in Line~\ref{while} of Algorithm~\ref{alg:tight1}) we have that after the last iteration
\begin{align}\left(s_{j_{\max}} -  \frac { \eps \gamma}{1-\gamma}\right)< s_{{j_{\max}-1}}. \label{sc-1 bound}\end{align}
By combining~\eqref{sc-1 bound} with~\eqref{th5.1ofzhang} we obtain an upper bound on $s(p)$, as:
$$ s(p) \leq  \frac{s_{j_{\max}}  -  (s_{j_{\max}}  -  \frac {  \Tau \gamma}{1-\gamma})  (1-\gamma)}{\gamma} = \frac{s_{j_{\max}}  - s_{j_{\max}}  + \gamma s_{j_{\max}}  +  \Tau\gamma}{\gamma}=s_{j_{\max}} + \Tau. $$
By the above inequality and since $s_{j_{\max}} \leq s(p)$, we have
\begin{align*}
s(p) \leq  s_{j_{\max}}  +  \Tau \leq s(p)+ \Tau.
\end{align*}
We conclude that the returned value of the algorithm $s' =s_{j_{\max}} +  \Tau$ satisfies Claim~(ii) as,
$$s(p) \leq  s' \leq s(p)+ \Tau.$$
\end{proof}
\section{Reduction from Affine to Non-Affine Subspace} \label{reducsec}
In this section we use the two integers $n,d\geq1$,  the number $z\geq1$ and $\Tau\in(0,  \frac{1}{2^{z+1} +2} ]$ as the additive error of our sensitivity bounds, which is polynomial in $1/n$ in our experimental results. We let $(P,w)$ be a weighted set of $n$ points in $\REAL^d$, $\psi= \left(\frac{\Tau}{z}\right)^z$,  $\displaystyle r = 1+ \max_{p\in P}\frac{D_z(p,0_d)}{\psi\Tau^2}$, and $\displaystyle{e_{d+1}=(0, \cdots,0,1)\in\REAL^{d+1}}$.

For every $q\in P$ we let $q' = (q\mid r) $ and $P' = \br{ q' \mid q\in P }$. The set of all affine $k$-subspaces in $\REAL^d$ is denoted by $\mathcal{S}^{A}_{d}$.
  For every affine $k$-subspace $S\in \mathcal{S}^{A}_{d}$, we define $\displaystyle{S'' := \{ (x\mid r)\mid x\in S\}}$ to be its corresponding affine $k$-subspace in $\REAL^{d+1}$, and $S'$ to be the corresponding (non-affine) $(k + 1)$-subspace of $\REAL^{d+1}$ that is spanned by $S''$; see Figs.~\ref{fig:big} and~\ref{fig:small}. Finally let $\displaystyle{\mathcal{S}_{d+1}}$ denote the union over all $(k+1)$ non-affine subspaces of $\REAL^{d+1}$.

\begin{figure}
  \centering
  \includegraphics[width=0.8\columnwidth]{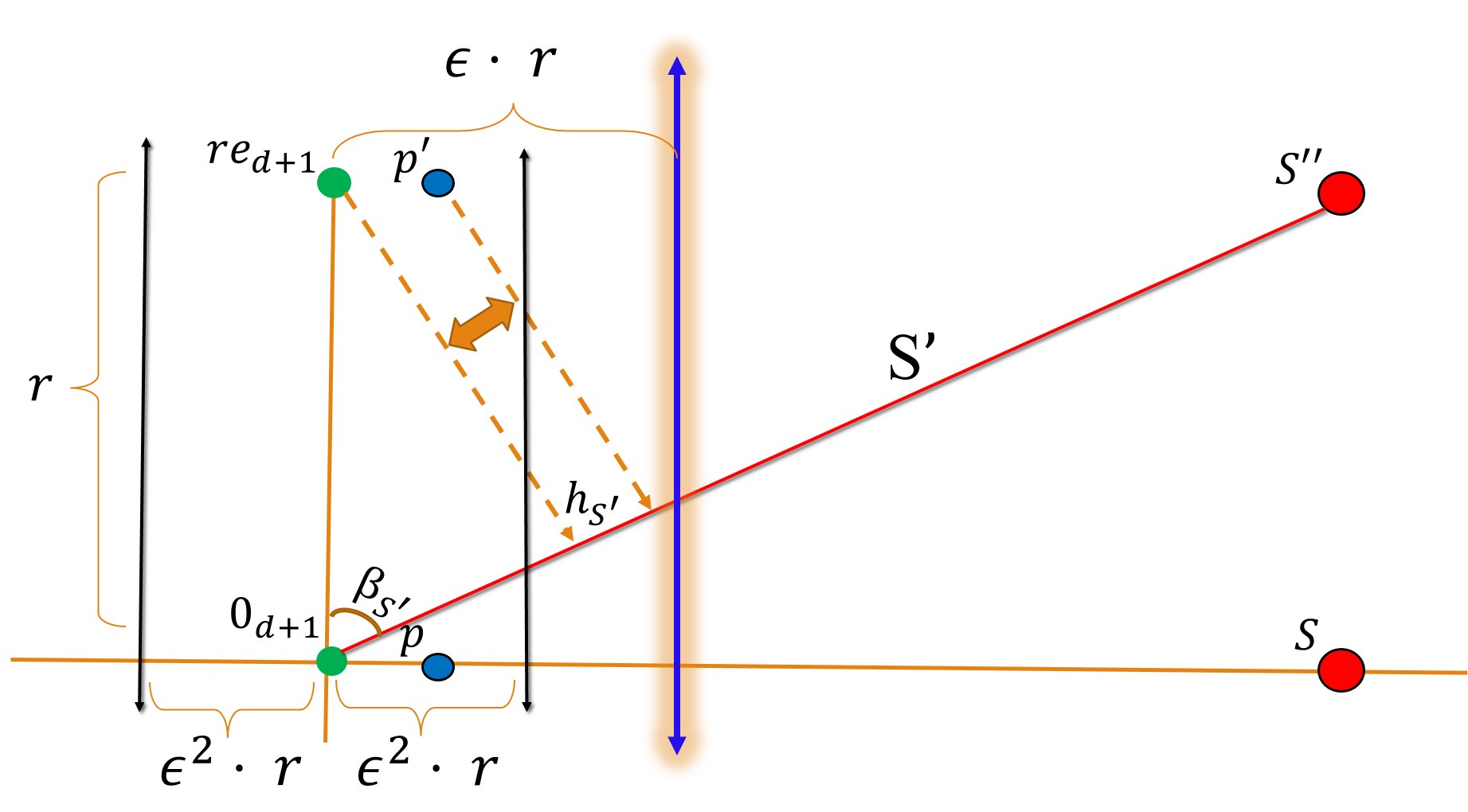}
  \caption{ Illustration of Claim (i) at Lemma~\ref{theorem:j-affine}.  This figure illustrates the case where $d=1$, $z=1$, and $k=0$, $p$ is a point in $\REAL$ such that $D_1(p,0_d) = \dist(p,0_d) = \abs{p-0_{d}} \leq \Tau^2 r$, and $p'=(p\mid r)\in \REAL^2$, $S$ is an affine $0$-subspace of $\REAL$ (a point in $\REAL$ for this case) such that $D_1(0_d,S)  = \dist(0_d,S) = \abs{S-0_{d}}\geq \Tau r$ , $S''$ is the corresponding affine $0$-subspace of $\REAL^2$, and $S'$ is the (non-affine) $1$-subspace (i.e., line)  of $\REAL^2$ that passes through $S''$. This illustration aims to show that the distance from $p$ to $S$ (i.e, $D_1(p,S)$) is approximately equal to the distance from $0_d$ to $S$ ($D_1(p,S)$), and that $D_1(p',S')$ is approximately equal to $D_1(re_{2},S' )$.  }
  \label{fig:big}
\end{figure}

\begin{figure}
  \centering
  \includegraphics[width=0.8\columnwidth]{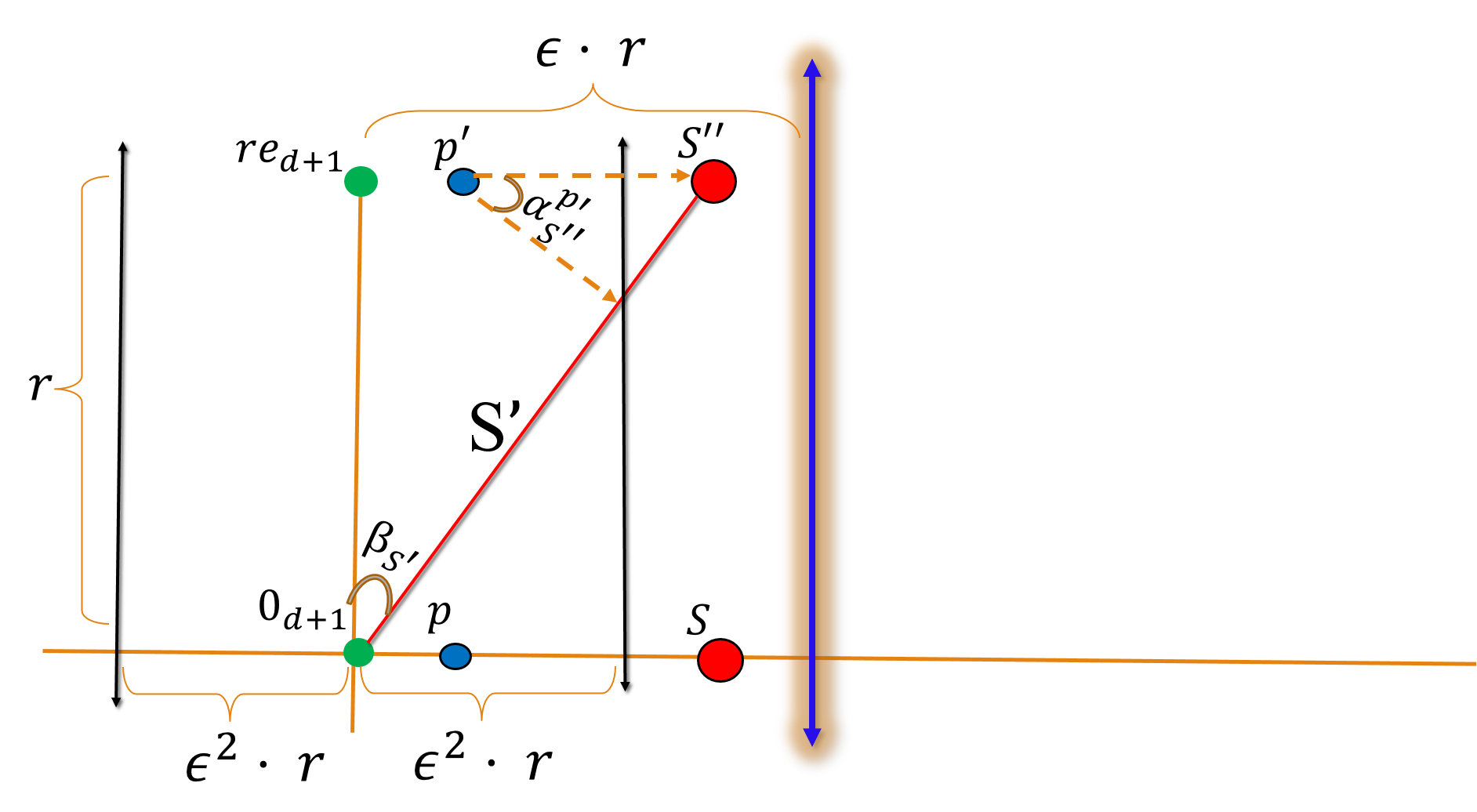}
  \caption{Illustration of Claim (ii) at Lemma~\ref{theorem:j-affine}. Same as Figure~\ref{fig:big} we have $d=1$, $z=1$, $k=0$, $p$ is a point in $\REAL$ such that $D_1(p,0_d) = \dist(p,0_d) = \abs{p-0_{d}} \leq \Tau^2 r$, $p'=(p\mid r)\in \REAL^2$, $S$ is an affine $0$-subspace of $\REAL$ (a point in $\REAL$ for this case) such that $D_1(0_d,S)  = \dist(0_d,S) = \abs{S-0_{d}}< \Tau r$ , $S''$ is the corresponding affine $0$-subspace of $\REAL^2$, and $S'$ is the (non-affie) $1$-subspace (i.e., line)  of $\REAL^2$ that passes through $S''$. This illustration aims to show that the distance from $p$ to $S$ (i.e, $D_1(p,S)$) is approximately equal to the distance from $p'$ to $S'$ (i.e., $D_1(p',S')$).}
\label{fig:small}
\end{figure}
\begin{lemma}\label{theorem:j-affine}
Let $S\in \mathcal{S}^{A}_{d}$ be an affine $k$-subspace of $\REAL^d$. For every $p \in P$ and its corresponding point $p'=(p\mid r)$, Claims (i)--(ii) hold as follows:
\begin{enumerate}[(i)]
\item if $D_z(0_d,S) \geq \Tau r$ then
\begin{align}
&| D_z(0_{d},S) - D_z(p,S) | \leq 2\Tau \cdot D_z(0_{d},S)\label{oo1}
\intertext{and}
&| D_z(re_{d+1},S') - D_z(p',S') |\leq  (2^z+1 )\Tau \cdot D_z(re_{d+1},S').\label{oo2}
\end{align}
\item if $D_z(0_d,S) < \Tau r$, then 
\begin{align}
&D_z(p',S') \leq D_z(p,S) \leq (1+\constalaa \Tau)D_z(p',S').\label{oo3}
\end{align}
\end{enumerate}
\end{lemma}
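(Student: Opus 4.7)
The plan is to derive explicit coordinate formulas for the four relevant squared distances and then carry out case-by-case $z$-th power estimates. Write $S=v+V$ where $V\subseteq\REAL^d$ is the $k$-dimensional linear part of $S$ and $v\in V^\perp$ is chosen so $\|v\|=\dist(0_d,S)$. Decompose any $p\in P$ as $p=p_V+\beta v+w$ with $p_V\in V$, $\beta\in\REAL$ and $w\in V^\perp\cap v^\perp$; a direct computation yields $\dist(p,S)^2=(\beta-1)^2\|v\|^2+\|w\|^2$. Parametrizing $S'=\{(\alpha v+u,\alpha r):\alpha\in\REAL,\,u\in V\}$ and minimizing $\|p'-(\alpha v+u,\alpha r)\|^2$ over $\alpha$ and $u$ yields
\[
\dist(p',S')^2=c^2(\beta-1)^2\|v\|^2+\|w\|^2,\qquad \dist(re_{d+1},S')^2=c^2\|v\|^2,
\]
where $c:=r/\sqrt{r^2+\|v\|^2}\in(0,1]$. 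These closed forms are the workhorse for the rest of the proof.

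For Claim (i), the definition of $r$ gives $\dist(p,0_d)\le(\psi\Tau^2 r)^{1/z}=(\Tau/z)\Tau^{1/z}r^{1/z}$, while the hypothesis $D_z(0_d,S)\ge\Tau r$ gives $\dist(0_d,S)\ge(\Tau r)^{1/z}$. Hence $\eta:=\dist(p,0_d)/\dist(0_d,S)\le(\Tau/z)\Tau^{1/z}\le\Tau/z$. The triangle inequality $|\dist(p,S)-\dist(0_d,S)|\le\dist(p,0_d)$ together with the elementary bounds $(1+\eta)^z\le e^{z\eta}\le 1+2\Tau$ and $(1-\eta)^z\ge 1-z\eta\ge 1-\Tau$ (valid since $\Tau\le 1/2$) then gives~\eqref{oo1}. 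Inequality~\eqref{oo2} follows the same scheme in $\REAL^{d+1}$: the triangle inequality gives $|\dist(p',S')-\dist(re_{d+1},S')|\le\|p'-re_{d+1}\|=\dist(p,0_d)$, and the explicit formula for $c$ yields the lower bound $\dist(re_{d+1},S')\ge\min(r,\|v\|)/\sqrt{2}$. Combining these produces a ratio at most $\sqrt{2}(\Tau/z)\Tau^{1/z}$, and the weaker constant $2^z+1$ arises because one now uses the coarser binomial estimate $(1+\delta)^z-1\le 2^z\delta$ valid for $\delta\in[0,1]$.

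For Claim (ii), the first inequality $D_z(p',S')\le D_z(p,S)$ is immediate from the two formulas since $c\le 1$. For the multiplicative upper bound I would argue that the ratio $\dist(p,S)^2/\dist(p',S')^2$ is maximized when $\|w\|=0$, where it equals $1/c^2=1+\|v\|^2/r^2$; this same bound in fact survives for general $\|w\|$ by a short rearrangement. The case hypothesis $\|v\|^z<\Tau r$ together with the tuned form of $r$ (with $\psi=(\Tau/z)^z$ chosen precisely for this step) then force $\|v\|^2/r^2$ to be small enough that $(1+\|v\|^2/r^2)^{z/2}\le 1+6z\Tau$, via $(1+x)^{z/2}\le e^{xz/2}$ and the assumption $\Tau\le 1/(2^{z+1}+2)$.

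The main obstacle is this last step: one must carefully exploit the calibrated form of $r$ and the specific exponent $\psi=(\Tau/z)^z$ to propagate $\|v\|^z<\Tau r$ into a bound on $\|v\|^2/r^2$ tight enough to survive the $(z/2)$-th power without exceeding the allowed $1+6z\Tau$ slack. Claim (i), by contrast, reduces to a two-line triangle-inequality argument once the explicit formulas for $\dist(p',S')$ and $\dist(re_{d+1},S')$ are in hand, and the $D_z(p',S')\le D_z(p,S)$ half of Claim (ii) drops out of those same formulas.
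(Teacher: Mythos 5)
Your closed-form expressions for the four distances are correct (I checked the minimization giving $\dist(p',S')^2=c^2(\beta-1)^2\norm{v}^2+\norm{w}^2$ with $c=r/\sqrt{r^2+\norm{v}^2}$), and they yield a genuinely different and more self-contained route than the paper, which instead quotes a weak triangle inequality for powered distances (inequality~\eqref{knownfact}, from prior work) and runs an angle/sine argument. Your treatment of Claim (i) is sound: the plain triangle inequality plus the bounds $\eta\le\Tau/z$, $(1+\eta)^z\le e^{z\eta}\le1+2\Tau$, $(1-\eta)^z\ge1-\Tau$ gives~\eqref{oo1}, and for~\eqref{oo2} your lower bound $\dist(re_{d+1},S')\ge\min(r,\norm{v})/\sqrt{2}\ge(\Tau r)^{1/z}/\sqrt{2}$ together with the convexity estimate $(1+\delta)^z\le1+2^z\delta$ does produce the constant $2^z+1$ (the arithmetic $2^z\cdot\sqrt{2}\,\Tau^{1/z}/z\le 2^z+1$ checks out under $\Tau\le\frac{1}{2^{z+1}+2}$). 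The left inequality of Claim (ii) is likewise immediate from your formulas.

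The gap is the final step of Claim (ii), exactly the one you flag. From $\norm{v}^z=D_z(0_d,S)<\Tau r$ you get only $\norm{v}^2/r^2<\Tau^{2/z}r^{2/z-2}$, and the ``tuned'' definition of $r$ cannot force this to be $O(\Tau)$: $r$ depends on $\max_{p\in P}D_z(p,0_d)$, which can be arbitrarily small, so $r$ can be arbitrarily close to $1$ while the hypothesis still allows $\dist(0_d,S)$ up to nearly $\Tau^{1/z}$. In that regime $\norm{v}^2/r^2\approx\Tau^{2/z}$, which for $z>2$ and small $\Tau$ is far larger than $\Tau$, so $(1+\norm{v}^2/r^2)^{z/2}\approx1+\tfrac{z}{2}\Tau^{2/z}$ exceeds $1+6z\Tau$; the tool you propose, $(1+x)^{z/2}\le e^{xz/2}$, cannot help because $xz/2$ is of order $z\Tau^{2/z}$, not $O(z\Tau)$. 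Moreover your own observation that the ratio equals exactly $1/c^2$ when $\norm{w}=0$ (e.g.\ $p=0_d\in P$) shows there is no hidden slack: with $z=3$, $P=\br{0_d}$ (so $r=1$), $\Tau=10^{-4}$ and $\dist(0_d,S)$ just below $\Tau^{1/3}$, one gets $D_z(p,S)/D_z(p',S')=(1+\dist^2(0_d,S))^{3/2}\approx1+1.5\,\Tau^{2/3}>1+18\Tau$, so the inequality you aim to prove is simply not available from the stated hypotheses when $z>2$ and $\Tau r<1$. The step does go through for $z\in[1,2]$, where $\Tau^{2/z}r^{2/z-2}\le\Tau$ and $(1+\Tau)^{z/2}\le1+\Tau\le1+6z\Tau$ --- which covers the paper's actual application with $z=2$ --- or under an additional assumption such as $\Tau r\ge1$. (For context: the paper's own proof of Claim (ii) silently deduces $\dist(0_d,S)\le\Tau r$ from $D_z(0_d,S)<\Tau r$, which also requires $\Tau r\ge1$, so your formulas in fact expose a real edge case of the lemma for $z>2$; but as a proof of the lemma as stated, your concluding step is a genuine gap and needs either the restriction $z\le 2$ or an explicit lower bound on $\Tau r$.)
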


\begin{proof}
By~\cite[Lemma~2.1]{Feldman:2012:DRW:2095116.2095222}, for every $x,y\in\REAL^d$ we have
\begin{align}
\abs{D_z(x,S) - D_z(y,S)} \leq \frac{D_z(x,y)}{\psi} + \Tau D_z(x,S).\label{knownfact}
\end{align}
Let $p\in P$ and let $p'=(p\mid r)$.

\textbf{Proof of Claim (i): } Inequality~\eqref{oo1} holds since
\begin{align*}
|D_z(p,S) - D_z(0_d,S)|&\leq \frac{D_z(p,0_d)}{\psi} + \Tau  D_z(0_d,S) \leq \Tau^2r + \Tau  D_z(0_d,S) \leq 2\Tau D_z(0_d,S),
\end{align*}
where the first inequality holds by substituting $x=0_d$ and $y=p$ in~\eqref{knownfact}, the second is by the definition of $r$, and the last inequality holds by the assumption of Claim~(i).

Let $h_{S''}=\mathrm{proj}(re_{d+1},S'')$. To prove~\eqref{oo2}, we observe that
\begin{align}
\dist(re_{d+1}, h_{S''}) = \dist(re_{d+1}, S'') = \dist(0_d,S)=D^{1/z}_z(0_d,S) \geq (\Tau r)^{\frac{1}{z}}, \label{dist bound}
\end{align}
where the first equality holds by the definition of $h_{S''}$, the second holds by the definition of $S''$ and $e_{d+1}$, the third holds by the definition of $D_z$, and the inequality holds by taking the power of $1/z$ of each side of the assumption $D_z(0_d,S)\geq \eps r$.
In addition let $h_{S'}=\mathrm{proj}(re_{d+1},S')$, and let $\beta_{S'} \in [0, \pi/2)$ denote the angle $\angle(re_{d+1}, 0_{d+1}, h_{S'})$. Hence,
\begin{align}
& \dist(re_{d+1},h_{S'}) = \sin{\beta_{S'}} \cdot \dist(re_{d+1}, 0_{d+1}) = \sin{\beta_{S'}} \cdot r.\label{dist(re1,projjre)}
\end{align}
Now we compute a lower bound on $\sin{\beta_{S'}}$:
\begin{align}
\sin{\beta_{S'}} & =  \frac{\dist(re_{d+1},h_{S''} )} {\dist(0_{d+1} , h_{S''})} \label{eq1}
\\&\geq  \frac{\dist(re_{d+1}, h_{S''})} { \dist( 0_{d+1},re_{d+1})+  \dist(re_{d+1}, h_{S''})  }\label{eq2}
\\&\geq  \frac{(\Tau r)^{\frac{1}{z}}} { \dist( 0_{d+1},re_{d+1})+  (\Tau r)^{\frac{1}{z}} }\label{eq3}
\\&=  \frac{(\Tau r)^{\frac{1}{z}}}{r+ (\Tau r)^{\frac{1}{z}}}
=\frac{\Tau^{\frac{1}{z}}} {\frac{r}{r^{\frac{1}{z}}} +\Tau^{\frac{1}{z}}}
=\frac{\Tau^{\frac{1}{z}}} {r^{\frac{z-1}{z}} +\Tau^{\frac{1}{z}}}\label{minval}\\
&\geq \frac{\Tau^{\frac{1}{z}}}{2r^\frac{z-1}{z}} \label{minval1}
\end{align}
where~\eqref{eq2} holds by the triangle inequality, \eqref{eq3} holds by~\eqref{dist bound},
and~\eqref{minval} holds since $\dist(0_{d+1},re_{d+1})= r$.
 Equation \eqref{minval1} holds since $r^{\frac{z-1}{z}}\geq\Tau^{\frac{1}{z}}$ for $z\geq1$, $r>1$ and $\Tau<1$.
 Plugging~\eqref{minval1} in~\eqref{dist(re1,projjre)} yields,
\begin{align}
& \dist(re_{d+1},h_{S'}) \geq  \frac{\Tau^{\frac{1}{z}}}{2r^\frac{z-1}{z}}\cdot r = \frac{(\Tau r)^{1/z}}{2}.\label{distbound}
\end{align}
By~\eqref{distbound} and the definition of $D_z$,
\begin{align}
& D_z(re_{d+1},S') = D_z(re_{d+1},\mathrm{proj}(re_{d+1},S'))=D_z(re_{d+1},h_{S'})\geq\bigg(\frac{(\Tau r)^{1/z}}{2}\bigg)^z
=\frac{\Tau r}{2^z}.\label{goodone}
\end{align}
To complete the proof of Claim (i),
\begin{align}
|D_z(p',S') - D_z(re_{d+1},S')| &\leq \frac{D_z(p',re_{d+1})}{\psi} + \Tau D_z(re_{d+1},S') \label{nn1} \\
& = \frac{D_z(p ,0_{d})}{\psi} + \Tau D_z(re_{d+1},S')\label{nn2} \\
& \leq\Tau^2 r + \Tau  D_z(re_{d+1},S') \label{nn3} \\
& \leq   (2^z+1 )\Tau D_z(re_{d+1},S'), \label{diffbound2}
\end{align}
where~\eqref{nn1} holds by substituting $x=p'$, $y=re_{d+1}$ and $S=S'$ in~\eqref{knownfact},~\eqref{nn2} is by the definition of $e_{d+1}$ and $p'$,~\eqref{nn3} is by the definition of $r$, and~\eqref{diffbound2} holds by~\eqref{goodone}.
%
\newcommand{\h}{ h_{S''}^{p'}}
\newcommand{\alphamy}{\alpha_{S''}^{p'}}

\textbf{Proof of Claim (ii):} In this case $D_z(0_d,S)< \Tau r$. Let $h_{S''}^{p'}=\mathrm{proj}(p',S'')=(\mathrm{proj}(p,S)^T\mid r)^T$ and $\alphamy \in [0, \pi/2)$ denote the angle $\angle(\mathrm{proj}(p', S'), p',\h)$. Hence,
\begin{align}\label{p,S}
\dist(p,S) = \dist(p',S'')=\frac{\dist(p',S') }{\cos{\alphamy}}  .
\end{align}
We have that $\cos{\alphamy} \leq 1$, thus $\dist(p',S') \leq \dist(p,S)$. Observing that $$\alphamy=\angle(re_{d+1}, 0_{d+1}, h_{S''}^{p'}) =\angle(re_{d+1}, 0_{d+1}, h_{S'}) = \beta_{S'}$$ yields, 
\begin{align}
\cos{\alphamy} =  \frac{re_{d+1}^T \h}{\norm{re_{d+1}}_2\norm{\h}_2} = \frac{(0,\cdots,0,r) (\mathrm{proj}(p,S)^T\mid r)^T}{\norm{(0,\cdots,0,r)}_2\norm{\h}_2} = \frac{r^2}{r\norm{\h}_2} = \frac{r}{\norm{\h}_2}.\label{cosal}
\end{align}
By the triangle inequality,
\begin{align}\label{normh}
\norm{\h}_2 \leq \norm{0_{d+1} -re_{d+1}}_2 + \norm{re_{d+1} - p'}_2 + \norm{p' - \h}_2. 
\end{align}
We also have $\norm{p'-\h}_2=\dist(p,S)$  and by the triangle inequality $\dist(p,S)\leq \norm{p}_2 + \dist(0_d,S)$. The bound on $\norm{p}_2$ is,
\begin{align}
\norm{p}_2 = (D(p,0_d))^{1/z} \leq (\Tau^2\psi r)^{1/z}\leq \Tau r. \label{normp}
\end{align}
Thus,
\begin{align}\label{boundph}
\norm{p'-\h}_2=\dist(p,S)\leq \norm{p}_2 + \dist(0_d,S) \leq 2\Tau r,
\end{align}
where the last inequality is by~\eqref{normp} and the assumption of Claim (ii). And by plugging~\eqref{boundph} in~\eqref{normh} and using the facts that $\norm{0_{d+1} -re_{d+1}}_2=r$ and $\norm{re_{d+1} - p'}_2 = \norm{p}_2\leq \Tau r$ we get that,
\begin{align*}
\norm{\h}_2 \leq r +\norm{p}_2 + 2\Tau r \leq (1+3\Tau) r.
\end{align*}
Together with~\eqref{cosal} this yields a lower bound on $\cos{\alphamy}$ as,
\begin{align*}
\cos{\alphamy}= \frac{r}{\norm{\h}_2} \geq  \frac{1}{ 1+3\Tau }.
\end{align*}
Now we obtain an upper bound for~\eqref{p,S} as,
\begin{align*}
\dist(p,S) = \frac{\dist(p',S') }{\cos{\alphamy}}  \leq (1+3 \Tau) \dist(p',S') .
\end{align*}

Thus we have that $ \dist(p',S') \leq \dist(p,S)  \leq(1+3 \Tau) \dist(p',S')$. By the definition of $D_z$ we have
\begin{align}
D_z(p',S') \leq D_z(p,S) \leq (1+3\Tau)^z D_z(p',S'). \label{plughere}
\end{align}
This proves Claim (ii) for the case $z=1$. Otherwise, by the Bernoulli's inequality we have that $(1+x)^y \leq 1+ \frac{xy}{1-(y-1)x}$ for every $x\in [-1,\frac{1}{y-1})$ and $y>1$. Thus substituting $y=z,x=3\Tau$ yields,
\begin{align*}
(1+3\Tau)^z D_z(p',S')  \leq \left(1 + \frac{3z\Tau}{1-(z-1)3\Tau}\right) D_z(p',S').
\end{align*}
Observe that since $\Tau \leq  \frac{1}{2^{z+1} +2} \leq \frac{1}{6(z-1)}$ we have that $1-(z-1)3\Tau \geq 1-(z-1)\frac{3}{6(z-1)} = 1/2$.  Hence,
\begin{align*}
 (1+3\Tau)^z D_z(p',S')  \leq \left(1 + \frac{3z\Tau}{1-(z-1)3\Tau}\right) D_z(p',S') \leq (1 +\constalaa \Tau) D_z(p',S').
\end{align*}
Plugging the last inequality in~\eqref{plughere} proves Claim (ii) for $z>1$ as
$$D_z(p',S') \leq D_z(p,S) \leq  (1 +\constalaa \Tau) D_z(p',S').$$
\end{proof}
\begin{lemma}\label{affinetononaffinecor}
For every $\displaystyle{p\in P}$ and its corresponding point $p'=(p\mid r)$ we have
\begin{align*}
\left|\sup_{S\in \mathcal{S}^{A}_{d}}\frac{w(p)\cdot D_z(p,S)}{\sqi w(q) D_z(q,S)} - \sup_{S'\in \mathcal{S}_{d+1}}\frac{w(p)D_z(p',S')}{\sqi w(q) D_z(q',S')}\right| \leq 16\Tau(2^z +1) \cdot \sup_{S'\in \mathcal{S}_{d+1}}\frac{w(p)D_z(p',S')}{\sqi w(q) D_z(q',S')}.
\end{align*}
\end{lemma}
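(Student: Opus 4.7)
For brevity write
\[
R(S) := \frac{w(p)\, D_z(p,S)}{\sqi w(q)\, D_z(q,S)}, \qquad R'(S') := \frac{w(p)\, D_z(p',S')}{\sqi w(q)\, D_z(q',S')}.
\]
The plan is to show $R(S) = (1\pm 16\Tau(2^z+1))\, R'(S')$ whenever $S\in\mathcal{S}^A_d$ and $S'\in\mathcal{S}_{d+1}$ are matched via the construction of Section~\ref{reducsec}, and then to take suprema on both sides. First, I would make the correspondence precise: every affine $k$-subspace $S$ has a unique $S'\in\mathcal{S}_{d+1}$ associated to it as in the paragraph before Lemma~\ref{theorem:j-affine}; conversely, every $S'\in\mathcal{S}_{d+1}$ that is not contained in the hyperplane $H:=\{x\in\REAL^{d+1}:x_{d+1}=0\}$ recovers an $S$ by intersecting $S'$ with $\{x_{d+1}=r\}$ (giving an affine $k$-subspace of $\REAL^{d+1}$) and projecting onto the first $d$ coordinates. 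The degenerate case $S'\subseteq H$ will be absorbed at the end.

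Next, I would split each matched pair according to the dichotomy in Lemma~\ref{theorem:j-affine}. In Case~(i), $D_z(0_d,S)\ge\Tau r$, and~\eqref{oo1} yields $D_z(q,S) = (1\pm 2\Tau)\, D_z(0_d,S)$ for every $q\in P$; summing over $q$, numerator and denominator of $R(S)$ each factor out the common $D_z(0_d,S)$ up to a $(1\pm 2\Tau)$ residue, so $R(S) = (1\pm O(\Tau))\cdot w(p)/\sqi w(q)$. The parallel estimate~\eqref{oo2}, with $re_{d+1}$ in place of $0_d$, yields $R'(S') = (1\pm O((2^z+1)\Tau))\cdot w(p)/\sqi w(q)$, so $R(S)$ and $R'(S')$ match up to the claimed factor. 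In Case~(ii), $D_z(0_d,S)<\Tau r$, the uniform estimate~\eqref{oo3} $D_z(q,S) = (1 + O(\Tau))\, D_z(q',S')$ for every $q\in P$ puts the same $(1+O(\Tau))$ factor into numerator and denominator of $R(S)/R'(S')$, which cancel up to a $(1\pm O(\Tau))$ residue after a short Bernoulli-style expansion. Either way, the per-pair comparison $R(S) = (1\pm 16\Tau(2^z+1))\, R'(S')$ holds.

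Finally, I would apply this pointwise comparison to both suprema. The inequality $\sup_S R(S) \le (1 + 16\Tau(2^z+1))\sup_{S'} R'(S')$ is immediate since each $S$ has a matching $S'$, and the reverse inequality follows from the same estimate for every $S'\not\subseteq H$. In the leftover degenerate case $S'\subseteq H$, I would observe that $D_z(q',S') = (\dist(q,T)^2+r^2)^{z/2}$ where $T$ is the projection of $S'$, and by $\|q\|_2\le \Tau r\ll r$ (see~\eqref{normp}) this is within a $(1\pm O(\Tau))$ factor of $r^z$ uniformly in $q$; hence $R'(S')\le (1+O(\Tau))\cdot w(p)/\sqi w(q)$, which is already matched up to the same error by any $S$ in Case~(i). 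The main obstacle will be careful bookkeeping to propagate the multiplicative $(1\pm O(\Tau))$ errors through numerator, denominator, and supremum while preserving the explicit constant $16\Tau(2^z+1)$; the hypothesis $\Tau\le 1/(2^{z+1}+2)$ is invoked precisely to keep denominators such as $1 - O((2^z+1)\Tau)$ bounded away from zero, so that $(1\pm\alpha)/(1\pm\beta)\le 1 + O(\alpha+\beta)$ in each step.
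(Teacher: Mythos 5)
Your plan is correct and follows essentially the same route as the paper's proof: split according to whether $D_z(0_d,S)\geq \Tau r$ (equivalently, whether $\sin\beta_{S'}$ exceeds the threshold $c_0$), use~\eqref{oo1} and~\eqref{oo2} to show both ratios are within $1\pm O\!\left((2^z+1)\Tau\right)$ of $w(p)/\sum_{q\in P}w(q)$ in the far case, use~\eqref{oo3} for a direct per-pair comparison in the near case, treat the degenerate subspaces $S'$ orthogonal to $e_{d+1}$ separately, and finish with the same $(1\pm c_1\Tau)^{\pm2}\leq 1\pm16c_1\Tau$ bookkeeping enabled by $\Tau\leq\frac{1}{2(2^z+1)}$. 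The only cosmetic differences are that the paper compares suprema over the partitioned families $Q_0,Q_0'$ and $Q_1,Q_1'$ rather than matched pairs, and it handles the degenerate case via the Lipschitz-type bound~\eqref{knownfact} instead of your explicit Pythagorean computation; both are equivalent in substance.
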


\begin{proof}
For every (non-affine) subspace $S'\in \mathcal{S}_{d+1}$ we denote $h_{S'}=\mathrm{proj}(re_{d+1},S')$ and $\beta_{S'} \in [0, \pi/2]$ denote the angle $\angle(re_{d+1}, 0_{d+1}, h_{S'})$. The following observation is from Figure~\ref{fig:big}:\\
\begin{observation} \label{obs:1} Let $S'$ be a non-affine $(k+1)$-subspace of $\REAL^{d+1}$ such that $\beta_{S'} < \pi/2$. Then the intersection of $S'$ with the hyperplane $\br{(x\mid r) \mid x \in \REAL^d}$ is the affine $k$-subspace $S'' \subset S'$ of $\REAL^{d+1}$ such that $S'$ is the linear span of $S''$, and the $(d+1)$th (last) coordinate of every point $x\in S''$ is $r$. Moreover there is an affine $k$-subspace $S$ of $\REAL^d$ such that $S'' = \br{(x \mid r) \mid x \in S}$. Hence,
\begin{align}
\sin{\beta_{S'}} = \frac{\dist(re_{d+1} , S'')}{ \dist(0_{d+1} , S'') }= \frac{\dist(re_{d+1} , S'')}{\sqrt{\dist^2(re_{d+1} , S'') + \dist^2(re_{d+1},0_{d+1})^2 } } = \frac{\dist(0_{d} , S)}{\sqrt{ \dist^2(0_{d} , S) +r^2}}. \label{sineq}
\end{align}
\end{observation}
The above observation will be used through the proof. Let $c_0 =\frac{(\Tau r)^{1/z}}{ \sqrt{(\Tau r)^{2/z} + r^2}}$. We partition the query set $\mathcal{S}_{d+1}$ into two disjoint subsets:
\begin{enumerate}[(i)]
\item $\displaystyle {Q'_{0} = \br{S' \in \mathcal{S}_{d+1} \mid  \sin{\beta_{S'}} \geq  c_0  }}$, and
\item $\displaystyle {Q'_{1} = \br{S' \in \mathcal{S}_{d+1}\mid 0\leq\sin{\beta_{S'}} < c_0  }}$.
\end{enumerate}
Similarly, we partition $\mathcal{S}^{A}_{d}$ into :
\begin{enumerate}[(i)]
\item $\displaystyle{Q_{0} =\br{S \in \mathcal{S}^{A}_{d}  \mid  D_z(0_d,S) \geq \Tau r}}$, and
\item $\displaystyle{Q_{1} =\br{S \in \mathcal{S}^{A}_{d}  \mid  D_z(0_d,S) < \Tau r }}$.
\end{enumerate}
Let $c_1 = 2^z +1$. We first proof the following pair of claims: \\
\textbf{Claim (i).} For every $p\in P$ we have
\begin{align*}
\left|\sup_{S\in Q_0}\frac{w(p)\cdot D_z(p,S)}{\sqi w(q) D_z(q,S)} - \sup_{S'\in Q_0'}\frac{w(p)D_z(p',S')}{\sqi w(q) D_z(q',S')}\right| \leq 16c_1\Tau \sup_{S'\in Q_0'}\frac{w(p)D_z(p',S')}{\sqi w(q) D_z(q',S')}.
\end{align*}
\textbf{Claim (ii).} For every $p\in P$ we have
\begin{align*}
\left|\sup_{S\in Q_1}\frac{w(p)\cdot D_z(p,S)}{\sqi w(q) D_z(q,S)} - \sup_{S'\in Q_1'}\frac{w(p)D_z(p',S')}{\sqi w(q) D_z(q',S')}\right| \leq 16c_1\Tau \sup_{S'\in Q_1'}\frac{w(p)D_z(p',S')}{\sqi w(q) D_z(q',S')}.
\end{align*}
Since $\mathcal{S}^{A}_{d} = Q_0 \cup Q_1$ and $\mathcal{S}_{d+1} = Q_0' \cup Q_1'$, combining both claims poofs the lemma.

\textbf{Proof of Claim (i).} By~\eqref{oo1} and since $c_1\geq 2$, for every pair $S\in Q_0$ and $p\in P$ we have
$$\displaystyle {(1-c_1\Tau)D_z(0_{d},S) \leq D_z(p,S) \leq(1+c_1\Tau) D_z(0_{d},S)}.$$
Hence,
\begin{align*}
\frac{w(p) (1-c_1\Tau)D_z(0_d,S) }{\sqi w(q)  (1+c_1\Tau) D_z(0_d,S) } \leq \frac{w(p)D_z(p,S)}{\sqi w(q) D_z(q,S)}  \leq \frac{w(p)(1+ c_1\Tau)D(0_d,S) }{\sqi w(q) (1- c_1\Tau)D_z(0_d,S) }.
\end{align*}
Since the above inequality holds for every $S \in Q_0$ we get that,
\begin{align}
\frac{(1- c_1\Tau)}{(1+c_1\Tau)}\frac{w(p)}{\sqi w(q) } \leq \sup_{S\in Q_0}\frac{w(p)D_z(p,S)}{\sqi w(q) D_z(q,S)}  \leq \frac{(1+c_1\Tau)}{(1- c_1\Tau)}\frac{w(p)}{ \sqi w(q) }. \label{beforebound}
\end{align}

Let $p'\in P'$. We now prove that for every $S' \in Q_0'$
\begin{align}
(1-c_1\Tau)D_z(re_{d+1},S') \leq D_z(p',S') \leq(1+c_1\Tau) D_z(re_{d+1},S') \label{bigsin}
\end{align}
by case analysis: first for $\sin\beta_{S'} \in [c_0,1)$ and then for $\sin\beta_{S'}=1$.
 If $S' \in  \br{ S' \in \mathcal{S}_{d+1} \mid  \sin{\beta_{S'}}\in [c_0,1) }$ then we have that $\beta_{S'} < \pi/2$. Hence, by  Observation~\ref{obs:1} there is an affine $k$-subspace $S$ (of $\REAL^d$) such that
$$\displaystyle{\sin{\beta_{S'}} = \frac{\dist(0_{d} , S)}{\sqrt{ \dist^2(0_{d} , S) + r^2}}}.$$
Combining this equality with the fact that $\sin{\beta_{S'}} \geq  c_0 =\frac{(\Tau r)^{1/z}}{\sqrt{(\Tau r)^{2/z} + r^2}}$ yields that $\dist(0_{d} , S) \geq (\Tau r)^{1/z}$. Taking the power of $z$ from both sides yields $D_z(0_d,S) \geq \Tau r$. Using this in~\eqref{oo2} yields that~\eqref{bigsin} holds for the case $\sin\beta_{S'} \in [c_0,1)$.

If $\sin \beta_{S'}= 1$ then we have that $\beta_{S'} = \pi/2$. This implies that $h_{S'} = 0_{d+1}$. Hence,
\begin{align}
\dist(re_{d+1},S') = \dist(re_{d+1},h_{S'}) =\dist(re_{d+1},0_{d+1}) = r \label{eqr}.
\end{align}
Hence, for every $S' \in \br{ S' \in\mathcal{S}_{d+1} \mid \sin{\beta_{S'}}  = 1 }$ we have
\begin{align}
\abs{D_z(re_{d+1},S') - D_z(p',S')}
&\leq \frac{D_z(re_{d+1},p')}{\psi} + \Tau D_z(re_{d+1},S')\label{n0}   \\
&\leq \frac{D_z(0_{d},p)}{\psi} + \Tau D_z(re_{d+1},S')\label{n1} \\
&\leq \frac{r\psi\Tau^2}{\psi} +  \Tau D_z(re_{d+1},S')\label{n2}\\
&= \Tau^2 r + \Tau  D_z(re_{d+1},S') \nonumber \\
&\leq  2\Tau  D_z(re_{d+1},S') \label{n4} \\
& \leq  c_1\Tau  D_z(re_{d+1},S')\label{n5} ,
\end{align}
where~\eqref{n0} holds by substituting $x=re_{d+1}$, $y=p'$ and $S=S'$ in~\eqref{knownfact},~\eqref{n1} is by the definition of $r$ and $p'$,~\eqref{n2} holds by the definiton of $\psi$ and $r$, ~\eqref{n4} holds by~\eqref{eqr}, and ~\eqref{n5} holds since $c_1 > 2$. This proves~\eqref{bigsin} also for the case that $\sin \beta_{S'}=1$.  Hence,~\eqref{bigsin} holds for every $S' \in Q_0'$.

By~\eqref{bigsin} we get that for every $p'\in P'$
\begin{align}
\frac{(1-c_1\Tau)}{(1+ c_1\Tau)}\frac{w(p)  }{\sqi w(q) } \leq \sup_{S'\in Q_0'}\frac{w(p) D_z(p',S') }{\sqi w(q)D_z(q',S') }  \leq \frac{(1+c_1\Tau)}{(1- c_1\Tau)}\frac{w(p)}{\sqi w(q)  } .\label{doneHtag}
\end{align}


Integrating~\eqref{doneHtag} with~\eqref{beforebound} yields
\begin{align} 
\frac{(1- c_1\Tau)^2}{(1+ c_1\Tau)^2}\sup_{S'\in Q_0'} \frac{w(p)D_z(p',S')}{\sqi w(q) D_z(q',S')} &  \leq
\frac{(1- c_1\Tau)}{(1+ c_1\Tau)}\frac{w(p)}{\sqi w(q) } \label{re1}
\\ &  \leq \sup_{S\in Q_0}\frac{w(p)D_z(p,S)}{\sqi w(q) D_z(q,S)} \label{re2} \\
&\leq \frac{(1+c_1\Tau)}{(1- c_1\Tau)}\frac{w(p)}{ \sqi w(q)  }\label{re3} \\
&\leq \frac{(1+ c_1\Tau)^2}{(1- c_1\Tau)^2}\sup_{S'\in Q_0'}\frac{w(p)D_z(p',S')}{\sqi w(q) D_z(q',S')},  \label{re4}
\end{align}
where~\eqref{re1} holds by multiplying the right hand side of~\eqref{doneHtag} by $\frac{(1- c_1\Tau)^2}{(1+ c_1\Tau)^2}$,~\eqref{re2} and~\eqref{re3} hold by~\eqref{beforebound}, and~\eqref{re4} holds by multiplying the left hand side of~\eqref{doneHtag} by $\frac{(1+ c_1\Tau)^2}{(1- c_1\Tau)^2}$.
We have
\begin{align}
\frac{(1-c_1\Tau)^2}{(1+ c_1\Tau)^2} &= \frac{1+c_1^2\Tau^2-2c_1\Tau}{(1+c_1\Tau)^2} =\frac{1+2c_1\Tau+c_1^2\Tau^2-4c_1\Tau}{(1+c_1\Tau)^2}\label{vav}
\\&=1-\frac{4c_1\Tau}{(1+c_1\Tau)^2} \geq  1-4c_1\Tau \geq 1-16c_1\Tau, \nonumber
\end{align}
and
\begin{align}
\frac{(1+c_1\Tau)^2}{(1 -  c_1\Tau)^2}&=\frac{1+c_1^2\Tau^2+2c_1\Tau}{(1 -c_1\Tau)^2}=\frac{1 -2c_1\Tau+c_1^2\Tau^2 + 4c_1\Tau}{(1 -c_1\Tau)^2}\label{vav1}  \\&=1+\frac{4c_1\Tau}{(1 -c_1\Tau)^2} \leq 1+\frac{4c_1\Tau}{(1 -\frac {c_1}{2 c_1} )^2} = 1+\frac{4c_1\Tau}{ \frac {1}{4} } = 1+16c_1 \Tau, \nonumber
\end{align}
where the inequality in~\eqref{vav1} holds since $\Tau \in \left(0,\frac{1}{2(2^z +1)}\right)$. By plugging~\eqref{vav} and~\eqref{vav1} in \eqref{re4} we get
\begin{align*}
(1- 16c_1\Tau)\sup_{S'\in Q_0'}\frac{w(p)D_z(p',S')}{\sqi w(q) D_z(q',S')} \leq  \sup_{S\in Q_0} \frac{w(p)D_z(p,S)}{\sqi w(q) D_z(q,S)}  \leq
(1+16c_1\Tau)\sup_{S'\in Q_0'}\frac{w(p)D_z(p',S')}{\sqi w(q) D_z(q',S')}.
\end{align*}
This proves Claim (i) as
$$\left|\sup_{S\in Q_0}\frac{w(p)\cdot D_z(p,S)}{\sqi w(q) D_z(q,S)} - \sup_{S'\in Q_0'}\frac{w(p)D_z(p',S')}{\sqi w(q) D_z(q',S')}\right| \leq 16c_1\Tau \sup_{S'\in Q_0'}\frac{w(p)D_z(p',S')}{\sqi w(q) D_z(q',S')}.$$

\textbf{Proof of Claim (ii).}
Let $S'\in Q'_1$. We have that $\beta_{S'}<\pi/2$. Hence, by Observation~\ref{obs:1} there is an affine $k$-subspace $S$ (of $\REAL^d$) such that
\begin{align}
\displaystyle{\sin{\beta_{S'}} = \frac{\dist(0_{d} , S)}{\sqrt{ \dist^2(0_{d} , S) + r^2}}}. \label{sinbetaa}
\end{align}
By the definition of $S'$ we have that $\sin{\beta_{S'}} <c_0 = \frac{(\Tau r)^{1/z}}{ \sqrt{(\Tau r)^{2/z} +  r^2}}$. Combining this with~\eqref{sinbetaa} yields that $\dist(0_{d} , S) < (\Tau r)^{1/z}$. Taking the power of $z$ from both sides yields that $D_z(0_d,S) < \Tau r$.
Using this is~\eqref{oo3} yields that for every $p' \in P$ and $S'\in Q_1'$ we have 
\begin{align*}
\frac{1}{(1+\constalaa\Tau)}\frac{ w(p)D_z(p',S')}{\sqi w(q) D_z(q',S')} \leq  \frac{w(p) D_z(p,S)}{\sqi w(q) D_z(q,S)} \leq  (1+\constalaa\Tau)\frac{w(p)  D_z(p',S')}{\sqi w(q) D_z(q',S') }.
\end{align*}
By combining the last inequality with the fact that
\begin{align*}
1-\constalaa\Tau\leq1-\frac{\constalaa\Tau}{1+\constalaa\Tau}=\frac{1+\constalaa\Tau-\constalaa\Tau}{1+\constalaa\Tau}=\frac{1}{1+\constalaa\Tau},
\end{align*}
we obtain that
\begin{align*}
&(1-\constalaa\Tau)\frac{ w(p)D_z(p',S')}{\sqi w(q) D_z(q',S')} \leq  \frac{w(p) D_z(p,S)}{\sqi w(q) D_z(q,S)} \leq  (1+\constalaa\Tau)\frac{w(p)  D_z(p',S')}{\sqi w(q) D_z(q',S')}.
\end{align*}
Since the above inequalities hold for every $S' \in Q_1'$ and $16c_1=16(2^z+1)>\constalaa$, this proves Claim~(ii) as
\begin{align*}
(1-16c_1 \Tau)\sup_{S' \in Q_1'}\frac{w(p)D_z(p',S')}{\sqi w(q) D_z(q',S')} \leq\sup_{S \in Q_1} \frac{w(p)D_z(p,S)}{\sqi w(q) D_z(q,S)}  \leq
(1+16c_1\Tau)\sup_{S' \in Q_1'}\frac{w(p)D_z(p',S')}{\sqi w(q) D_z(q',S')},
\end{align*}
i.e.,
\begin{align*}
\bigg|\sup_{S'\in Q_1'}\frac{w(p)D_z(p',S')}{\sqi w(q) D_z(q',S')} -  \sup_{S\in Q_1}\frac{w(p)D_z(p,S)}{\sqi w(q) D_z(q,S)}\bigg|  \leq 16\Tau c_1 \cdot \sup_{S'\in Q_1'}\frac{w(p)D_z(p',S')}{\sqi w(q) D_z(q',S')}.
\end{align*}
\end{proof}

\section{Sensitivity of affine $k$-subspaces}
\newcommand{\afinebound}{\textsc{Affine-Sensitivity}}
\setcounter{AlgoLine}{0}
\begin{algorithm}[th]
\caption{$\afinebound(P,w,p, k,\Tau)$; see Theorem~\ref{lemma:pca}}\label{alg:tight}
{\begin{minipage}{\textwidth}
\begin{tabbing}
\textbf{Input:} \quad\=A weighted set $(P,w)$ of $n$ points in $\mathbb{R}^d$, a point $p\in P$,  \\\> an integer $k\in\br{0,\cdots,d-1}$, and an error parameter $\Tau\in(0,1)$. \\
\textbf{Output:} An additive $\Tau$-approximation $\tilde{s}$ to the sensitivity $s(p)$ of $p$.
\end{tabbing}\end{minipage}}

\nl$\psi := \left(\frac{\Tau}{2}\right)^2$

\nl$\displaystyle{r := 1+ \max_{p\in P}\frac{D(p,0_d)}{\psi\Tau^2}}$\label{lineneeded}

\nl$p' := (p\mid r)$  \quad\quad\tcp{$p'\in R^{d+1}$ is a concatenation of $p\in\REAL^d$ and $r\in\REAL$.}

\For {every $q \in P$}{
	
	 $q':=(q\mid r)$

	 $w(q') := w(q)$

}

$P' := \br{ q' \mid q \in P }$\label{definp}

$s' := \nonaafinebound(P',w,p', k+1,\Tau)$ \quad\quad\tcp{See Algorithm~\ref{alg:tight1}.}\label{defs}

$\tilde{s}:= s'+80\Tau $

\Return $\tilde{s}$
\end{algorithm}

\begin{theorem} \label{lemma:pca} Let $(P,w)$ be a weighted set of $n$ points in $\REAL^d$, $p\in P$, $\Tau\in(0,\frac{1}{12}]$ be an error parameter, and let $k\in \br{0,\cdots,d-1}$ be an integer. Let $\mathcal{S}^{A}_{d}$ denote the set of all affine $k$-subspaces in $\REAL^d$, $\displaystyle{s(p)=\sup_{S\in \mathcal{S}^{A}_{d}}\frac{w(p) D(p,S)}{\sqi w(q) D(q,S)}}$ denote the sensitivity of $p$ in the query space $(P,w,\mathcal{S}^{A}_{d},D)$, and let $\tilde{s}$ be the output of a call to $\afinebound(P,w,p, k,\Tau)$; See Algorithm~\ref{alg:tight}. Then
\[
s(p)\leq \tilde{s}\leq s(p)+161\Tau.
\]
\end{theorem}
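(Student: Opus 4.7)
The plan is to compose two approximation guarantees already established: the affine-to-non-affine reduction of Lemma~\ref{affinetononaffinecor} (instantiated with $z=2$) and the non-affine approximation of Lemma~\ref{lemma:svd}. Algorithm~\ref{alg:tight} sets $\psi = (\Tau/2)^2 = (\Tau/z)^z$ and precisely the $r$ used in Section~\ref{reducsec}, lifts each $q\in P$ to $q'=(q\mid r)\in \REAL^{d+1}$, and calls $\nonaafinebound$ to estimate the sensitivity of $p'$ with respect to the family $\mathcal{S}_{d+1}$ of non-affine $(k+1)$-subspaces of $\REAL^{d+1}$. I would first verify parameter compatibility with Lemma~\ref{affinetononaffinecor}: its hypothesis $\Tau \leq 1/(2^{z+1}+2) = 1/10$ is implied by the standing assumption $\Tau \leq 1/12$.

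Write $\displaystyle{\hat{s}(p') = \sup_{S' \in \mathcal{S}_{d+1}} \frac{w(p) D(p',S')}{\sum_{q \in P} w(q) D(q',S')}}$ for the non-affine sensitivity of the lifted point. Lemma~\ref{affinetononaffinecor} then yields the multiplicative bound $|s(p) - \hat{s}(p')| \leq 16\Tau(2^z+1)\, \hat{s}(p') = 80\Tau\,\hat{s}(p')$. The key auxiliary step is converting this into an additive bound by observing that $\hat{s}(p') \leq 1$: since $p'\in P'$ and every term is non-negative, the numerator $w(p)D(p',S')$ is one of the summands in the denominator. Consequently, $|s(p) - \hat{s}(p')| \leq 80\Tau$.

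Next I would apply Lemma~\ref{lemma:svd} to the call $\nonaafinebound(P',w,p',k+1,\Tau)$. Since $k\in\{0,\ldots,d-1\}$, the subspace dimension $k+1$ lies in $\{1,\ldots,d\}\subseteq\{0,\ldots,(d+1)-1\}$, so the lemma applies in $\REAL^{d+1}$. In both the exact case $k+1=d$ and the $\Tau$-additive case $k+1<d$, the returned value $s'$ satisfies $\hat{s}(p')\leq s' \leq \hat{s}(p')+\Tau$. Chaining the two bounds and using $\tilde{s} = s'+80\Tau$ gives $\tilde{s} \geq \hat{s}(p')+80\Tau \geq s(p)$ for the lower bound and $\tilde{s} \leq \hat{s}(p')+81\Tau \leq s(p)+161\Tau$ for the upper bound, which is exactly the claim.

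The only genuinely non-mechanical step is the conversion of Lemma~\ref{affinetononaffinecor}'s multiplicative guarantee into an additive one via the trivial bound $\hat{s}(p')\leq 1$; without that observation, the approximation errors of the two lemmas would not compose into a clean additive-$\Tau$ estimate. Everything else is a short arithmetic chain.
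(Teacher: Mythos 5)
Your proposal is correct and follows essentially the same route as the paper: compose Lemma~\ref{affinetononaffinecor} (with $z=2$, converting the multiplicative $80\Tau$ guarantee to an additive one via $s(p')\leq 1$) with the $\Tau$-additive guarantee of Lemma~\ref{lemma:svd} on the lifted instance, then chain with $\tilde{s}=s'+80\Tau$. Your extra checks (that $\Tau\leq 1/12$ satisfies the $\Tau\leq 1/(2^{z+1}+2)$ hypothesis and that $k+1$ is a valid dimension for Algorithm~\ref{alg:tight1} in $\REAL^{d+1}$) are details the paper leaves implicit, but the argument is the same.
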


\begin{proof}
Let $S\in \mathcal{S}^{A}_{d}$. Notice that in this theorem $z=2$ and $D(p,S)=D_2(p,S)=\dist^2(p,S)$. Let  $\mathcal{S}_{d+1}$ denote the set of all non-affine $(k+1)$-subspaces of $\REAL^{d+1}$ and let $\displaystyle{s(p') = \sup_{S'\in \mathcal{S}_{d+1}}\frac{w(p)D(p',S')}{\sqi w(q) D(q',S')}}$. First by the definition of the set $P'$ at Line~\ref{definp} of Algorithm~\ref{alg:tight} and using Lemma~\ref{affinetononaffinecor} we have that for every point $p\in P$ and its corresponding $p'=(p\mid r)\in P'$ the following hold
\begin{align*}
 |s(p') - s(p)| \leq 16(2^z +1) \Tau \cdot s(p') \leq   80\Tau \cdot s(p') \leq 80\Tau,
\end{align*}
where the last inequality holds since the sensitivity is always bounded by $1$ (i.e., $s(p')\leq1$). From the previous inequality we get
\begin{align}
s(p)-80 \Tau \leq s(p') \leq s(p) + 80\Tau. \label{pahes1}
\end{align}
 Let $s'$ be the output of a call to $\nonaafinebound(P',w,p', k+1,\Tau)$ as defined at Line~\ref{defs} of Algorithm~\ref{alg:tight}. By Lemma~\ref{lemma:svd} we have
\begin{align}
s(p') \leq s' \leq s(p') + \Tau. \label{pahes2}
\end{align}
Combining~\eqref{pahes1} and~\eqref{pahes2} yields
$$s(p) \leq s(p')+80\Tau \leq  s' + 80\Tau \leq s(p') + 81\Tau \leq s(p) + 161\Tau, $$
where the first inequality holds by adding $80\Tau$ to both sides of the left hand side inequality in~\eqref{pahes1}, the second and the third inequalities are by~\eqref{pahes2}, and the third holds by adding $81\Tau$ to both side of the right hand side inequality in~\eqref{pahes1}. Considering the returned value $\tilde{s} = s' + 80\Tau$  proves the theorem as,
$$s(p) \leq \tilde{s}\leq s(p) + 161\Tau.$$
\end{proof}

\section{Experimental Results}\label{emp}
In this section we run benchmarks on real-world databases and compare our sampling algorithm with existing ones.

\paragraph{Algorithms. } We implemented the following sampling algorithms (distributions) for computing a coreset of $n$ points where every point is sampled with probability that is proportional to: (i) $1/n$ (uniform), (ii) existing sensitivity upper bound which is the sensitivity sampling algorithm of~\cite{DBLP:journals/corr/abs-1209-4893} that is mentioned at ``Introduction" section, and (iii) our tight sensitivity bound (Algorithm~\ref{alg:tight1} for SVD, and Algorithm~\ref{alg:tight} for PCA). 

\paragraph{Software and Hardware. }
 We implemented those algorithms in Python 3.6 using the libraries Numpy~\cite{oliphant2006guide} and Scipy~\cite{SciPy}. We then run experimental results that we summarize in this section.  The experments where done using Intel i7-6850K CPU @ 3.60GHZ and 64GB RAM.

\subsection{Experimental Results for $k$-PCA}\label{pcaexp}
\paragraph{Datasets. } We used the following two datasets from~\cite{anguita2013public}:
(i) Gyroscope data, which we call  ``Gyro" in our graphs.
(ii) Embedded accelerometer data ($3$-axial linear accelerations) which we call ``Acc" in our graphs.
The data sets are resulted from experiments that have been carried out with a group of $30$ volunteers within an age bracket of $19$-$48$ years. Each person performed six activities (WALKING, WALKING UPSTAIRS, WALKING DOWNSTAIRS, SITTING, STANDING, LAYING) while wearing a smartphone (Samsung Galaxy S II) on the waist. Using its embedded gyroscope, $3$-axial angular velocities were recorded, at a constant frequency of $50$Hz. The experiments have been video-recorded to label the data manually.
Data was collected from $n=7352$ measurements. Each instance consists of measurements from $3$ dimensions, $x$, $y$, $z$, each in a size of $d=128$.  The results are those the corresponding $3$ datasets.

\paragraph{The experiment. } We ran Algorithms (i)-(iii) on the above datasets in order to compute sensitivities and sample coresets of variant sizes between $1000$ to $7000$. For each coreset, we computed the sum $OPT_k(A)$ of squared distances from the rows of the input matrix $A$ to the affine $k$-subspace that minimizes this sum. We then computed this sum to the optimal solution $OPT_k(C)$ on the coreset (to the rows of $A$). The approximation error $\eps\in (0,1)$ was then defined to be $1-OPT_k(C)/OPT_k(A)$.


We used two values of $k= 5$ and $k=10$, and run each experiment 50 times.
 Results for the gyroscope data  (dataset (i)) are presented in Fig.~\ref{F1} and results for the accelerometer data (dataset (ii)) are presented in Fig.~\ref{F2}.

 	\begin{figure}[h!]
		\begin{subfigure}[h]{\sca\textwidth}
		\centering
		\includegraphics[scale=\sca]{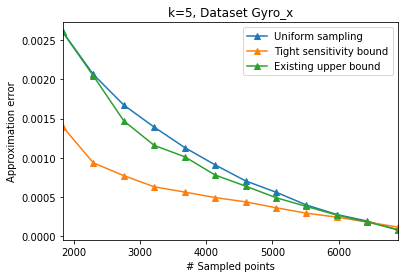}
	\end{subfigure}
		\begin{subfigure}[h]{\sca\textwidth}
		\centering
		\includegraphics[scale=\sca]{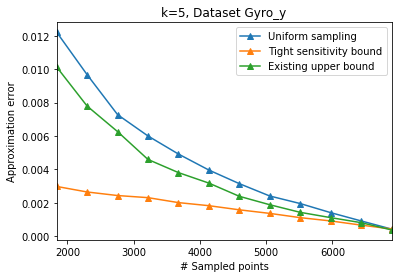}
	\end{subfigure}
	\begin{subfigure}[h]{\sca\textwidth}
		\centering
		\includegraphics[scale=\sca]{small_data/gyroxk5.png}
	\end{subfigure}

	\begin{subfigure}[h]{\sca\textwidth}
		\centering
		\includegraphics[scale=\sca]{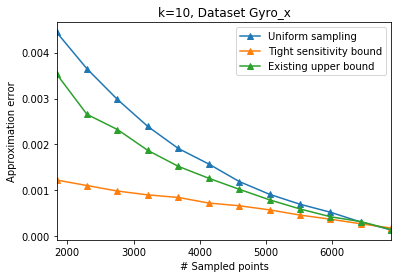}
	\end{subfigure}
		\begin{subfigure}[h]{\sca\textwidth}
		\centering
		\includegraphics[scale=\sca]{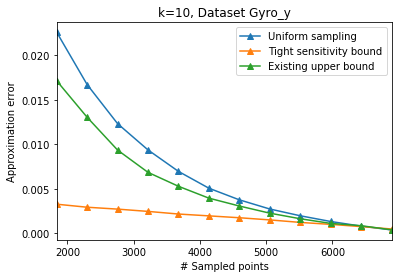}
	\end{subfigure}
	\begin{subfigure}[h]{\sca\textwidth}
		\centering
		\includegraphics[scale=\sca]{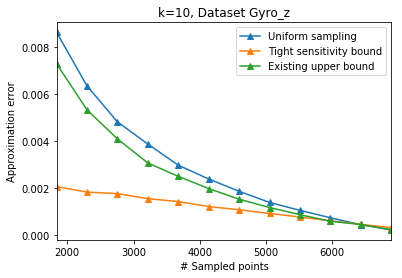}
	\end{subfigure}

\caption{\small\label{F1} Experimental results of Subsection~\ref{pcaexp} for the gyroscope data.}
\end{figure}

	\begin{figure}[h!]
		\begin{subfigure}[h]{\sca\textwidth}
		\centering
		\includegraphics[scale=\sca]{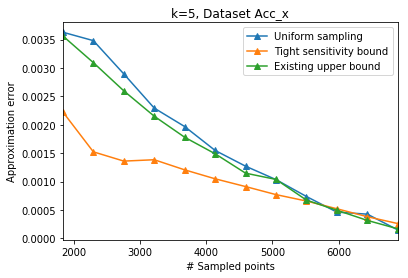}
	\end{subfigure}
		\begin{subfigure}[h]{\sca\textwidth}
		\centering
		\includegraphics[scale=\sca]{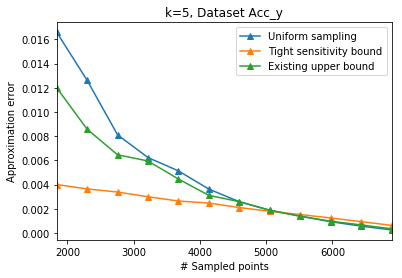}
	\end{subfigure}
	\begin{subfigure}[h]{\sca\textwidth}
		\centering
		\includegraphics[scale=\sca]{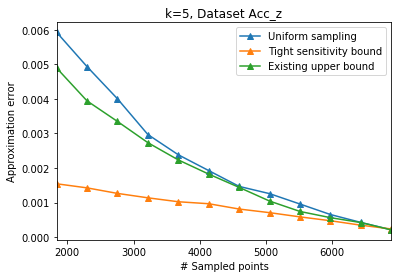}
	\end{subfigure}

	\begin{subfigure}[h]{\sca\textwidth}
		\centering
		\includegraphics[scale=\sca]{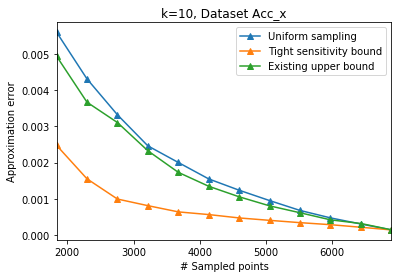}
	\end{subfigure}
		\begin{subfigure}[h]{\sca\textwidth}
		\centering
		\includegraphics[scale=\sca]{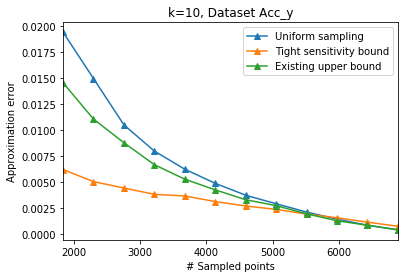}
	\end{subfigure}
	\begin{subfigure}[h]{\sca\textwidth}
		\centering
		\includegraphics[scale=\sca]{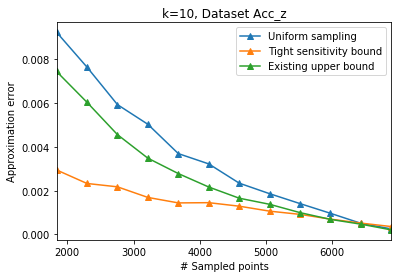}
	\end{subfigure}

\caption{\small\label{F2}Experimental results of Subsection~\ref{pcaexp} for the accelerometer data.}
\end{figure}

\subsection{Experimental Results for $k$-SVD}\label{svdexp}

\paragraph{Dataset}
We downloaded the document-term matrix of the English Wikipedia from~\cite{wic2019}, a sparse matrix of $4,624,611$ rows that correspond to documents, and $100$k columns (the dictionary of the $100$k most common words in Wikipedia~\cite{dic2012}). The entry in the $i$th row and $j$th column of this matrix is the number of how many appearances word number $j$ has in article number $i$. We call this data set ``Wiki" in our graphs. 

\paragraph{Handling large data.} To handle this large dataset in memory, we maintain the coreset for the streaming set of rows, one by one, via the common merge and reduce tree that is usually used for this purpose; see e.g.\cite{feldman2010coresets} for details.

\paragraph{The experiment. } 
We ran Algorithms (i)-(iii) on the document-term matrix of the English Wikipedia dataset in order to compute sensitivities and sample coresets of variant sizes between $1000$ to $7000$. For each coreset, we computed the sum $OPT_k(A)$ of squared distances from the rows of the input matrix $A$ to the (non-affine) $k$-subspace that minimizes this sum. We then computed this sum to the optimal solution $OPT_k(C)$ on the coreset (to the rows of $A$). The approximation error $\eps\in (0,1)$ was then defined to be $1-OPT_k(C)/OPT_k(A)$.
 We ran with different values of $k$: $1$,$3$,$9$ and $11$.

\begin{figure}[h!]
	\begin{subfigure}[h]{0.5\textwidth}
		\centering
		\includegraphics[scale=0.5]{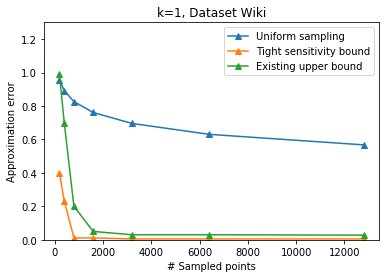}
	\end{subfigure}
	\begin{subfigure}[h]{0.5\textwidth}
		\centering
		\includegraphics[scale=0.5]{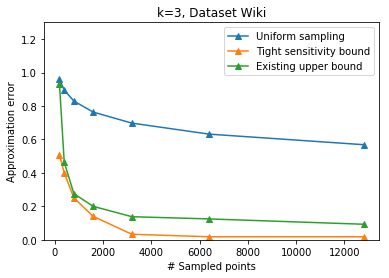}
	\end{subfigure}
	\begin{subfigure}[h]{0.5\textwidth}
		\centering
		\includegraphics[scale=0.5]{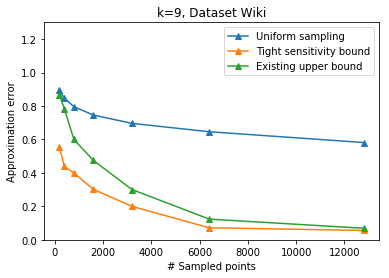}
	\end{subfigure}
	\begin{subfigure}[h]{0.5\textwidth}
		\centering
		\includegraphics[scale=0.5]{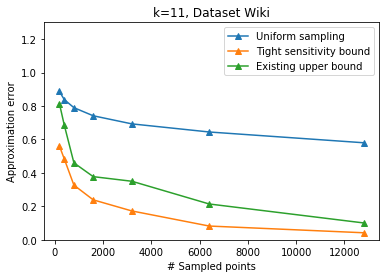}
	\end{subfigure}
\caption{\small\label{F5}Experimental results of Subsection~\ref{svdexp} for the document-term matrix of the English Wikipedia dataset.}
\end{figure}

\subsection{Conclusions and open problems}
We presented algorithms to compute exact sensitivity bounds for the $k$-SVD query spaces. Since the size of the coresets depends on the total sensitivity, we obtained coresets of size smaller and data dependent compared to existing worst-case upper bounds. We then suggested a generic reduction that enables us to generate tight sensivities also for the $k$-PCA problem (for affine $k$-subspaces).

Our experimental results show that our coreset indeed always smaller in practice.
We hope that the presented approach and open code would help to compute tight sensitivities for many other problems such as $k$-clustering, and other machine/deep learning problems.

\clearpage

\clearpage
\bibliographystyle{alpha}
\bibliography{tight_bounds}
\end{document}